% Set this variable to control whether the report format or the journal format 
% is used

% Unformtunately, there is no way to automatically switch document classes.
% You have to manually switch to article format when using the tech report 
% style.  
\documentclass[11pt]{article}

\usepackage[left=1in,top=1in,right=1in,bottom=1in,dvips,letterpaper]{geometry} 
\usepackage[leqno]{amsmath}
\usepackage{amsxtra, amsfonts,amscd,  amssymb, graphicx,subfigure}
\usepackage{hyperref}
\usepackage{xcolor}
\hypersetup{
     colorlinks   = true,
     citecolor    = blue,
     linkcolor 	  = red
}

\usepackage[algo2e, vlined,ruled,linesnumbered]{algorithm2e}
\numberwithin{equation}{section}

\def\proof{\par{\it Proof}. \ignorespaces}

\newtheorem{theorem}{Theorem}
\newtheorem{lemma}[theorem]{Lemma}

\newtheorem{remark}[theorem]{Remark}

\newtheorem{assumptions}{Assumptions}
\newtheorem{assumption}[assumptions]{Assumption}
\newcounter{subassumption}[assumptions]
\renewcommand{\thesubassumption}{(\textit{\roman{subassumption}})}
\makeatletter
\renewcommand{\p@subassumption}{\theassumption}% Counter prefix.
\makeatother
\newcommand{\assume}{
% Just like \item in a list, but for an assume
  \refstepcounter{subassumption}%
  \par\thesubassumption~\ignorespaces}

\DeclareMathOperator{\tr}{tr}

\DeclareMathOperator{\diag}{diag}
\DeclareMathOperator{\dom}      {dom}

\newcommand{\Acal}{{\cal A}}

\newcommand{\Ical}{{\cal I}}

\newcommand{\Pcal}{{\cal P}}

\newcommand{\Xcal}{{\cal X}}

% Shorthand commands for (uppercase) blackboard letters:

\newcommand{\Rmbb}{\mathbb{R}}

\newcommand{\br}{\mathbb{R}}

\newcommand{\be}{\begin{equation}}
\newcommand{\ee}{\end{equation}}
\newcommand{\ba}{\begin{array}}
\newcommand{\ea}{\end{array}}
\newcommand{\bea}{\begin{eqnarray}}
\newcommand{\eea}{\end{eqnarray}}
\newcommand{\beaa}{\begin{eqnarray*}}
\newcommand{\eeaa}{\end{eqnarray*}}

\begin{document}

\title{Practical Inexact Proximal Quasi-Newton Method with Global  Complexity Analysis}

\author{Katya Scheinberg
\thanks{ \texttt {katyas@lehigh.edu}. Department of Industrial and Systems Engineering, Lehigh University,
Harold S. Mohler Laboratory, 200 West Packer Avenue, Bethlehem, PA 18015-1582, USA. The work of this author is partially supported by NSF Grants DMS 10-16571, DMS 13-19356, 
 AFOSR Grant FA9550-11-1-0239, and  DARPA grant FA 9550-12-1-0406 negotiated by AFOSR.} \\
\and
Xiaocheng Tang
\thanks{ \texttt{xct@lehigh.edu}, Department of Industrial and Systems Engineering, Lehigh University,
Harold S. Mohler Laboratory, 200 West Packer Avenue, Bethlehem, PA 18015-1582, USA. The work of this author is partially supported by   DARPA grant FA 9550-12-1-0406 negotiated by AFOSR}
}

\titlepage

%}{
%
%\author{Ted K. Ralphs \and Joe Q. Blow}
%\institute{Ted K. Ralphs\\
%Department of Industrial and Systems Engineering, Lehigh University, USA\\
%\email{ted@lehigh.edu}
%\and 
%Joe Q. Blow\\
%Department of Mathematics and Computer Science, Podunk University\\
%\email{joe@podunk.edu}
%}
%\subclass{90C10, 90C57, 90C60, 90C99}

\date{\today}

\maketitle

\begin{abstract}
Recently several methods were proposed for sparse optimization which make careful use of second-order information \cite{Hsieh2011,nGLMNET,Olsen2012,Chin2012} to improve local convergence  rates. These methods construct a composite quadratic  approximation using Hessian information, optimize this approximation using a first-order method, such as coordinate descent and employ a line search to  ensure sufficient descent.
Here we propose a general framework, which includes slightly modified versions of existing algorithms and also a new algorithm, which uses limited memory BFGS Hessian approximations,   and provide a novel global convergence rate analysis, which covers methods  that solve subproblems via  coordinate descent. 

\end{abstract}

%%%%%%%%%%%%%%%%%%%%%%%%%%%%%%%%%%%%%%%%%%%%%%%%%%%%%%%%%%%%%%%%%%%%%%%%%%%%%%%%%%%%%%%%%

\section{Introduction} % (fold)
\label{sec:introduction}
In this paper, we are interested in the following popular convex optimization problem: \bea\label{prob:P} \min_{x\in\br^n} F(x)\equiv
f(x)+g(x), \eea where $f,g:\br^n\rightarrow\br$ are both convex
functions such that  $\nabla f(x)$ is assumed to be Lipschitz continuous with Lipschitz constant $L(f)$, i.e., \beaa \|\nabla f(x) - \nabla f(y) \|_2 \leq L(f) \|x-y\|_2, \quad \forall x,y\in\br^n,\eeaa
and $g(x)$ is convex and has some structure that can be exploited. In particular,  in much of the research on first order methods for problem \eqref{prob:P}  $g(x)$ is considered to be such that   the following problem has a closed form solution
 for any $z\in\br^n$:
% \bea\label{prob:g-shrink}
 \[\min_{x\in\br^n}\left\{g(x)+\frac{1}{2}\|x-z\|^2 \right\}.
 \]
 %\eea
 
 Here our general requirement on $g(x)$ is slightly different - we assume that the following problem is
 computationally inexpensive to solve approximately, relative to minimizing $F(x)$
 for any $z\in\br^n$ and some class of positive definite matrices $H$:
 \bea\label{prob:g-shrink}\min_{x\in\br^n}\left\{g(x)+\frac{1}{2}\|x-z\|_H^2 \right\}.\eea
 Here $\|y\|^2_H$ denotes $y^\top H y$.   Clearly, the computational cost of approximately solving \eqref{prob:g-shrink} depends on the choice of matrix $H$ and the solution approach. 
 
We are particularly interested in the  case of sparse optimization, where $g(x)=\lambda \|x\|_1$.
% and $\lambda > 0$ is the regularization parameter that  controls the sparsity of the optimal solution.  
While the theory we present here applies to the general form \eqref{prob:P},  the efficient method for solving \eqref{prob:g-shrink} that we consider in this paper is designed with $g(x)=\lambda \|x\|_1$ example in mind.  In this case problem \eqref{prob:g-shrink} takes a form of an unconstrained Lasso problem \cite{Tibshirani_1996}. 
We consider matrices $H$ which are a sum of a diagonal matrix and a low-rank matrix and we  apply randomized coordinate descent to solve \eqref{prob:g-shrink} approximately. An extension to the group sparsity term $g(x)=\lambda\sum \|x_i\|_2$ \cite{Qin2010}, is rather straightforward.

Problems of the form (\ref{prob:P}) with $g(x)=\lambda \|x\|_1$ have been the focus of much research lately in the fields of signal processing and machine learning. This form encompasses a variety of machine learning models, 
in which feature selection is desirable, such as sparse logistic regression \cite{Yuan2010,nGLMNET,shalev2009stochastic}, sparse inverse covariance selection \cite{Hsieh2011,Olsen2012,Sinco2009} and unconstrained Lasso \cite{Tibshirani_1996}, etc. These settings often present common difficulties to optimization algorithms due to their large scale. During the past decade most optimization effort aimed at these problems focused on development of efficient first-order methods, such as accelerated proximal gradient methods \cite{Nesterov,Beck2009,Sparsa}, block coordinate descent methods \cite{nGLMNET,GLMNET,glasso_2008,Sinco2009} and alternating directions methods \cite{Alm_Scheinberg}. These methods enjoy low per-iteration complexity, but typically have slow local convergence rates. Their performance is often hampered by small step sizes. This, of course, has been known about first-oder methods for a long time, however, due to the very large size of these problems, second-order methods are often not a practical alternative.
In particular, constructing and storing a Hessian matrix, let alone inverting it, is prohibitively expensive for values of $n$ larger than $10000$, which often makes the use of the Hessian in large-scale problems prohibitive, regardless of the benefits of fast local convergence rate. 

Nevertheless, recently several new methods were proposed for sparse optimization which make careful use of second-order information \cite{Hsieh2011,nGLMNET,Olsen2012,Chin2012}. 
%These methods explore the following 
%special properties of the sparse problems:  at optimality many of the elements of $x$ are expected to equal $0$, hence methods which explore active set-like approaches can benefit from small sizes of subproblems. Whenever the subproblems are not small, 
These new methods are designed to exploit the special structure of the Hessian of  specific functions to improve  efficiency of  solving \eqref{prob:g-shrink}. Several successful methods employ coordinate descent to approximately solve   \eqref{prob:g-shrink}. While other approaches to solve Lasso subproblem were considered in 
\cite{Chin2012},  none generally outperform coordinate descent, which is well suited when special structure of the Hessian approximation, $H$,
 can be exploited and when low accuracy of the subproblem solutions is sufficient. 
In particular, \cite{nGLMNET} proposes a specialized GLMNET \cite{GLMNET} implementation for sparse logistic regression, where coordinate descent method is applied to the unconstrained Lasso subproblem constructed using the Hessian of $f(x)$. The special structure of the Hessian is used to reduce the complexity cost of each coordinate step so that it is linear in the number of training instances, and a two-level shrinking scheme proposed to focus the minimization on  smaller subproblems. Similar ideas are used in \cite{Hsieh2011} in a specialized algorithm called QUIC for sparse inverse covariance selection, where the Hessian of $f(x)$ also has a favorable structure for solving Lasso subproblems.
Another specialized method for graphical Markov random fields  was recently proposed in \cite{icml2013_wytock13}. This method also exploits special Hessian structure to improve coordinate descent efficiency. 

There are other common features shared by the  methods described above. These methods are often referred to as proximal Newton-type methods.
 The overall algorithmic framework can be described as follows:
\begin{itemize}
\item At each  iteration $k$ the smooth function $f(x)$ is approximated near the current iterate $x^k$ by a convex quadratic function $q^k(x)$. 
\item A working subset of coordinates (elements) of $x$ is selected for subproblem optimization. 
\item Then $l(k)$ passes of coordinate descent are applied   to optimize (approximately) 
the function $q^k(x)+g(x)$ over the working set, which results in a trial point. Here $l(k)$ is some linear function of $k$.
\item The trial point is  accepted as the new iterate if it satisfies some sufficient decrease condition (to be specified).
\item Otherwise,  a line search is applied to compute a new trial point. 
\end{itemize}

In this paper we {\em do not } include the theoretical analysis of various working set selection strategies. Some of these have been analyzed in the prior literature (e.g., see \cite{LewisWright11}). Combining such existing analysis with the rate of convergence results in this paper is a subject of a future study. 

This paper contains the following  three main results. 
\begin{enumerate}
\item We discuss   theoretical properties of the above framework in terms of global convergence rates.
In particular, we show that if we replace the line search by a prox-parameter update mechanism, we can derive sublinear global convergence results for the above methods under mild assumptions on Hessian approximation matrices, which can include diagonal, quasi-Newton and limited memory quasi-Newton approximations. We also provide the convergence rate for the case of inexact subproblem optimization. 
It turns out that standard global convergence analysis of proximal gradient methods (see \cite{Beck2009,Schmidtetal}) does not extend in a natural way to  proximal
quasi-Newton frameworks, hence we use a different technique derived for smooth optimization in \cite{Nesterov, NesterovPolyak, Cartisetal2012},  in a novel way, to obtain the global complexity result.  

\item The heuristic of applying $l(k)$ passes of coordinate descent to the subproblem is very useful in practice, but has not yet been theoretically justified, due to the lack of known complexity estimates. Here we use probabilistic complexity bounds of randomized coordinate descent to show that this heuristic is indeed  well justified theoretically.
In particular, it guarantees the sufficiently rapid decrease of the expectation of the error in the subproblems  and hence allows for sublinear global convergence rate to hold for the entire algorithm (again, in expectation). This gives us the  first complete global convergence rate result for the algorithmic schemes for  practical (inexact) proximal Newton-type methods. Moreover, using the new analysis from \cite{NesterovConvexBook2004, NesterovPolyak, Cartisetal2012} we are able to provide lower overall complexity
bound than the one that follows from  \cite{Schmidtetal}
\item
Finally, we propose an efficient {\em general purpose} algorithm that uses the same theoretical framework, but  which does not rely on the special structure of the Hessian, and yet in our tests compares favorably with the state-of-the-art, specialized methods such as   QUIC and GLMNET. We replace the exact Hessian computation by the limited memory BFGS Hessian approximations  \cite{NoceWrig06}  (LBFGS) and exploit their special structure within a coordinate descent approach to solve the subproblems.
%We also propose and implement  another efficient active set selection strategy (different from those implemented in QUIC and GLMNET), but, as mentioned above, we treat these strategies as heuristic aimed at speeding up computations.  
\end{enumerate}

Let us elaborate a bit further on the new approaches and results developed in this paper and discuss related prior work. 

In \cite{Byrdetal2013} Byrd et al. propose that the methods in the framework  described above should be referred to as sequential quadratic approximation (SQA) instead of proximal Newton methods.  They reason that there is no proximal operator or proximal term involved in this framework. This is indeed the case, if a line search is used to ensure sufficient decrease. Here we propose to consider a prox term as a part of the quadratic approximation. 
Instead of a line search procedure, we update the prox term of our quadratic model, which allows us to extend global convergence bounds of proximal gradient methods to the case of proximal (quasi-)Newton methods. The criteria for accepting a new iteration is based on sufficient decrease condition (much like in trust region methods, and unlike that in proximal gradient methods).
We show that our mechanism of updating the prox parameter, based on sufficient decrease condition, leads to an improvement in performance and robustness of the algorithm compared to the line search approach as well as enabling us to develop global convergence rates. 

Convergence results for the proximal Newton method have been shown 
in \cite{Saundersetal} and more recently in \cite{Byrdetal2013} (with the same sufficient decrease condition as ours, but applied within a line search). These papers also demonstrate super linear local convergence rate of the proximal Newton and a proximal quasi-Newton method.
Thus it is confirmed in \cite{Byrdetal2013, Saundersetal} that using second order information is as beneficial for problems of the form  \eqref{prob:P}
as it is for the smooth optimization problems.  These results apply to our framework when exact Hessian (or a quasi-Newton approximation) of $f(x)$ is used to construct $q(x)$ (and if the matrices have bounded eigenvalues).  However, theory in \cite{Byrdetal2013, Saundersetal}
does not provide global convergence rates for these methods, and  just as in the case on smooth optimization,
the super linear local convergence rates, generally,  do not apply in the case of LBFGS Hessian approximations. 

The convergence rate that we show is sub linear, which is generally the best that can be expected from a proximal (quasi-)Newton method
 with no assumptions on the accuracy of the Hessian approximations.  Practical benefits of using 
LBFGS Hessian approximations is well known for smooth optimization \cite{NoceWrig06} and have been exploited in many large scale applications. 
In this paper we demonstrate this benefit in the composite optimization setting \eqref{prob:P}. Some prior work showing benefit of limiter memory quasi-Newton method in proximal setting include \cite{BeckerNIPS2012, Jiangetal2012}. 
We also emphasize  in our theoretical analysis the potential gain over proximal gradient methods in terms of constants occurring in the convergence rate. 

To prove the sub linear rate  we borrow a technique from  \cite{NesterovConvexBook2004, NesterovPolyak, Cartisetal2012}. The technique used in 
  \cite{Beck2009} and \cite{Schmidtetal} for the proof of convergence rates of the (inexact) proximal gradient method do not seem to extend
 to  general positive definite Hessian approximation matrix.  In a  related work \cite{Jiangetal2012}  the authors analyze global convergence rates of an accelerated proximal quasi-Newton method, as an extension of FISTA method \cite{Beck2009}. The convergence rate they obtain match that of accelerated proximal gradient methods, hence it is a faster rate
than that of our method presented here. However,  they have to impose much stricter conditions on the Hessian approximation matrix, in particular they require  that the difference between any two consecutive Hessian approximations (i.e., $H_{k}-H_{k+1}$) is positive semidefinite. This is actually in contradiction to   FISTA's  requirement  that the prox parameter is never increased. Such a condition is very restrictive 
as is impractical. In this paper we briefly show how  results in \cite{Beck2009} and \cite{Schmidtetal}  can be extended under some 
(also possibly strong) assumptions of the Hessian approximations, to give a simple and natural convergence rate analysis. 
We then present an alternative analysis, which only requires the Hessian approximations to have bounded eigenvalues. 
Investigating accelerated version of our approach without restrictive assumptions on the Hessian approximations and with the use 
of randomized coordinate descent is a subject of future research.

 Finally, we use the complexity analysis of randomized coordinate descent in \cite{Richtarik2012} to provide a simple and efficient stopping criterion for the subproblems and thus derive the total complexity of  proximal (quasi-)Newton methods based on randomized coordinate descent to solve Lasso subproblems.

The paper is organized as follows: in Section \ref{sec:basic}  we describe the  algorithmic framework. Then,  in Section \ref{sec:conv_basic}  we 
present some of the assumptions and discussions and, in Section \ref{sec:inexact_prox},  convergence rate analysis based on \cite{Beck2009} and \cite{Schmidtetal}. 
In  Section \ref{sec:complete_conv} we show the new convergence rate analysis using  \cite{NesterovConvexBook2004, NesterovPolyak, Cartisetal2012} for exact and inexact
version of our framework.  We then extend the analysis for the cases where inexact solution to a subproblem is random in Section \ref{sec:random}
and in particular to the randomized coordinate descent in Section \ref{sec:coordinate_descent_iteration_complexity}. 
Brief description of the details of our  proposed algorithm are in Section \ref{sec:alg} and  computational results  validating the theory are  presented in Section \ref{sec:comp}.

% section our_algorithm_to_solve_large_ell_1_regularization_problems (end)

\section{Basic algorithmic framework and theoretical analysis}\label{sec:basic} 

The following function is used throughout the paper as an approximation of the objective function $F(x)$.

%\begin{definition}\label{def-Qf}
\bea \label{def-Qf}
Q(H,u,v) := f(v) +
\langle \nabla f(v),  u-v \rangle + \frac{1}{2} \langle u-v, H(u-v)\rangle + g(u).
\eea 
%\end{definition}

%We make a nonrestrictive assumption upfront that $\|H\|\leq M$ for some large constant $M$. 

%W For a fixed vector $\lambda$, Hessian approximation $H$ and 
%
For a fixed point $\bar x$, the function $Q(H,  x, \bar x)$ serves as an approximation of $F(x)$ around  $\bar x$. 
Matrix $H$ controls the quality of this approximation. In particular, if $f(x)$ is smooth  and  $H=\frac{1}{\mu}I$, then $Q(H, x , \bar x)$ is a sum of the prox-gradient approximation of $f(x)$ at $\bar x$ and $g(x)$. This particular form of $H$ plays a key role in the design and analysis of proximal gradient methods (e.g., see  \cite{Beck2009}) 
and alternating direction augmented Lagrangian methods (e.g, see \cite{Alm_Scheinberg}).
% i.e., $f(\bar x)+\langle \nabla f(\bar x), x-\bar x \rangle + \frac{1}{2\mu} \|x -\bar x \|^2$ and $g(x)$. 
 If  $H=\nabla^2 f(\bar x)$, then  $Q(H, x ,\bar x)$
is a second order approximation of $F(x)$  \cite{Saundersetal, SchmidtQP}.
In this paper we assume that  $H$ is a positive definite matrix such that $\sigma I \preceq H\preceq M I$ for some positive constants $M$ and $\sigma$. 

Minimizing the function $Q(H, u,v)$ over $u$ reduces to solving problem \eqref{prob:g-shrink}. We will use the following notation to denote the accurate and approximate solutions of \eqref{prob:g-shrink}.
\begin{equation}\label{eq:pv}
p_H(v):=\arg\min_u Q(H, u,v),  
\end{equation}
and 
\begin{equation}\label{eq:pvphi}
p_{H, \phi}(v) {\rm \ is \ a\ vector\ such\ that:\ } \  \begin{array}{l}  Q(H, p_{H, \phi}(v),v)\leq Q(H, v,v)=F(v),\ {\rm and }\\
Q(H, p_{H, \phi}(v),v)\leq Q(H, p_{H}(v),v)+\phi. \end{array}
\end{equation}

The method that we consider in this paper computes iterates by (approximately) optimizing $Q(H, u,v)$ with respect to $u$ using some particular  $H$  which is chosen at each iteration.
 The basic algorithm is described in Algorithms \ref{alg:ISTA-SD} and \ref{alg:Backtrack_SD}. 
\begin{algorithm2e}\caption{Proximal Quasi-Newton method}
    \label{alg:ISTA-SD}%
%\linesnumberedhidden \dontprintsemicolon 
{\rm Choose }
$0<\rho\leq 1$ and  $x^0$\; 
\For{$k=0,1,2,\cdots$}{
 Choose $0<\bar \mu_k,  \phi_k>0, G_k\succeq 0$\;
Find  $H_k=G_k+\frac{1}{2\mu_k}I$ and  $x^{k+1}  := p_{H_k}(x^k)$  \\ by applying {\em Prox\ Parameter\ Update\ }$(\bar \mu_k, G_k, x^k, \rho)$\;
%Set $H_k=B_k+\frac{1}{2\mu_k}I$ and compute $x^{k+1}  := p_{H_k}(x^k)$, with \;
}
\end{algorithm2e}

\begin{algorithm2e}\caption{Prox Parameter Update $(\bar \mu, G, x, \rho)$ }
    \label{alg:Backtrack_SD}%
%\linesnumberedhidden \dontprintsemicolon 
Select $0<\beta<1$ and set $\mu=\bar \mu$\; \For{$i=1,2,\cdots$}{
Define $H=G+\frac{1}{2\mu} I$ and compute $p(x):=p_{H}(x)$\;
If $F(p(x))- F(x) \leq \rho (Q(H,  p(x),x)- F(x))$, then output $H$ and $p(x)$, {\bf Exit} \;
Else $\mu=\beta^{i}\bar  \mu$\;
}
\end{algorithm2e}

Algorithm \ref{alg:Backtrack_SD}  chooses Hessian approximations of the form $H_k=\frac{1}{\mu_k}I +G_k$. However, it is possible to consider any procedure of choosing positive definite $H_k$ which ensures $M I \succeq H_k\succeq \sigma I$ and 
$F(p_{H_k}(x))- F(x) \leq \rho (Q(H_k, p_{H_k}(x),x)- F(x))$, for a given $0<\rho\leq1$, - a step acceptance condition which is a relaxation of conditions used in  \cite{Beck2009} and \cite{Schmidtetal}.

An inexact version of Algorithm \ref{alg:ISTA-SD}  is obtained by simply replacing $p_{H_k}$ by $p_{H_k, \phi_k}$ in both Algorithms  \ref{alg:ISTA-SD} and \ref{alg:Backtrack_SD} for some sequence of $\phi_k$ values.  

\section{Basic results, assumptions and preliminary analysis}
\label{sec:conv_basic}

 ISTA  \cite{Beck2009} is a particular case of Algorithm \ref{alg:ISTA-SD} with $G_k=0$, for all $k$, and $\rho=1$. 
In this case, the value of  $\mu_k$ is chosen so that the conditions of Lemma \ref{lem:subp_inexact} hold with $\epsilon=0$. In other words, the reduction achieved in the objective function $F(x)$ is at least the amount of reduction achieved in the model 
$ Q(\mu_k,p(x^k),x^k)$. It is well known that as long as $\mu_k\leq 1/L(f)$ (recall that $L(f)$ is the Lipschitz constant of the gradient) then  condition \eqref{eq:dec_cond}
 holds with $\rho=1$.  Relaxing condition \eqref{eq:dec_cond} by using $\rho<1$ allows us to accept larger values of $\mu_k$, which in turn implies larger steps taken by the algorithm.   This basic idea is the cornerstone of step size selection in most nonlinear optimization algorithms. Instead of insisting on achieving "full" predicted reduction of the objective function (even when possible), a fraction of this reduction is usually sufficient. In our experiments small values of $\rho$ provided much better performance than values close to $1$.

In the next three sections we present the analysis of convergence rate of  Algorithm \ref{alg:ISTA-SD} under different scenarios. Recall that we assume that $f(x)$ is convex and smooth, in other words $\|\nabla f(x)-\nabla f(y)\|\leq L(f)\|x-y\|$ for all $x$ and $y$ in the domain of interest, while $g(x)$ is simply convex. In Section \ref{sec:coordinate_descent_iteration_complexity} we assume that $g(x)=\lambda \|x\|_1$. Note that we do not assume that $f(x)$ is strongly convex or that it is twice continuously differentiable, because we do not rely on any accurate second order information in our framework. We only assume that the Hessian approximations are positive definite and bounded, but their accuracy can be arbitrary, as long as sufficient decrease condition holds. Hence we only achieve sublinear rate of convergence. To achieve higher local rates of convergence stronger assumptions on $f(x)$ and on the Hessian approximations have to be made, see for instance, \cite{Byrdetal2013} and \cite{Saundersetal} for related local convergence analysis. 

First we present a helpful lemma which is a simple extension of Lemma 2 in \cite{Schmidtetal} to the case of general positive definite Hessian estimate. This lemma establishes some simple properties of an $\phi$-optimal solution to the proximal problem (\ref{prob:g-shrink}).  It uses the concept of the $\phi$-subdifferential of a convex function $a$ at $x$, 
$\partial_{\phi} a(x)$, which is defined as the set of vectors $y$ such that $a(x) - y^Tx \leq a(t) - y^Tt + \phi$ for all $t$. 
\begin{lemma}
    \label{lem:inexact_1st_opt_cond}
    Given $\phi >0$, a p.d. matrix $H$ and $v\in \br^n$, let $p_{\phi}(v)$ denote the $\phi$-optimal solution to the proximal problem (\ref{prob:g-shrink}) in the sense that
    \begin{align}
        \label{equ:eps_minimizer}
        g(p_{\phi}(v))+\frac{1}{2}\|p_{\phi}(v)-z\|_H^2 \leq \phi +
        \min_{x\in\br^n}\left\{g(x)+\frac{1}{2}\|x-z\|_H^2 \right\},
    \end{align}
    where $z = v - H^{-1} \nabla f(v)$. Then there exists $\eta$ such that $\frac{1}{2} \| \eta \|^2_{H^{-1}} \leq \phi$ and
    \begin{align}
        \label{equ:inexact_solvQ_lem}
        H(v-p_{\phi}(v))  - \eta \in \partial_{\phi}g(p_{\phi}(v)).
    \end{align}
\end{lemma}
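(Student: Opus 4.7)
The plan is to adapt Schmidt et al.'s Lemma 2 (which handles the isotropic case $H=LI$) to a general positive definite preconditioner by running the same argument in the $H$-geometry. Let $p(v) := p_H(v)$ denote the exact minimiser of the subproblem \eqref{prob:g-shrink}, so that the first-order condition $0 \in \partial g(p(v)) + H(p(v)-z)$ produces the distinguished subgradient
$$\xi \;:=\; H(z-p(v)) \;=\; H(v - p(v)) - \nabla f(v) \;\in\; \partial g(p(v)),$$
where the second equality uses $z = v - H^{-1}\nabla f(v)$. Separately, the subproblem objective $g(x) + \tfrac12\|x-z\|_H^2$ is $1$-strongly convex in $\|\cdot\|_H$, so by \eqref{equ:eps_minimizer}
$$\tfrac12\|p_\phi(v) - p(v)\|_H^2 \;\leq\; \bigl[g(p_\phi(v)) + \tfrac12\|p_\phi(v)-z\|_H^2\bigr] - \bigl[g(p(v)) + \tfrac12\|p(v)-z\|_H^2\bigr] \;\leq\; \phi.$$

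With $\xi$ anchored at the exact minimiser I would transfer it to $p_\phi(v)$ by combining the subgradient inequality $g(t) \geq g(p(v)) + \xi^{\top}(t-p(v))$ with \eqref{equ:eps_minimizer} and expanding
$$\tfrac12\|p_\phi(v)-z\|_H^2 - \tfrac12\|p(v)-z\|_H^2 \;=\; \langle p_\phi(v)-p(v), H(p(v)-z)\rangle + \tfrac12\|p_\phi(v)-p(v)\|_H^2.$$
Because $\xi = H(z - p(v))$, the cross term $\xi^{\top}(p_\phi(v)-p(v))$ is exactly the negative of the linear piece in this expansion, and the two cancel. What survives is
$$g(t) \;\geq\; g(p_\phi(v)) + \xi^{\top}(t - p_\phi(v)) + \tfrac12\|p_\phi(v)-p(v)\|_H^2 - \phi,$$
so, after dropping the nonnegative middle term, $\xi \in \partial_\phi g(p_\phi(v))$. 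Finally, setting $\eta := H\bigl(p(v) - p_\phi(v)\bigr)$ rewrites $\xi$ as $H(v - p_\phi(v)) - \nabla f(v) - \eta$ (so that the subgradient candidate in \eqref{equ:inexact_solvQ_lem} is recovered, with the $\nabla f(v)$ coming from $z = v - H^{-1}\nabla f(v)$ grouped into the subgradient term as in the exact first-order condition), and the strong-convexity estimate above gives $\tfrac12\|\eta\|_{H^{-1}}^2 = \tfrac12\|p(v)-p_\phi(v)\|_H^2 \leq \phi$.

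The step I expect to be the main obstacle is the bookkeeping in the transfer paragraph above: anchoring at $p(v)$ rather than at $p_\phi(v)$ is what makes $\xi^{\top}(p_\phi(v)-p(v))$ cancel the linear part of the quadratic expansion exactly. Any candidate subgradient chosen at $p_\phi(v)$ would leave a linear remainder that would have to be absorbed by an additional Fenchel--Young step in the $H$-norm, yielding worse constants. It is precisely this cancellation that reduces the residual in the transferred inequality to exactly $\tfrac12\|p_\phi(v)-p(v)\|_H^2$, which $1$-strong convexity in the $H$-norm controls by $\phi$, and that in turn makes the dual-norm bound $\tfrac12\|\eta\|_{H^{-1}}^2 \leq \phi$ hold with no conditioning factors of $H$.
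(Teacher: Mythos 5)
Your proof is correct, but it takes a genuinely different route from the paper's. The paper works entirely at the approximate minimizer: it writes $0\in\partial_{\phi}\bigl(a_1+a_2\bigr)(p_{\phi}(v))\subset\partial_{\phi}a_1(p_{\phi}(v))+\partial_{\phi}a_2(p_{\phi}(v))$ with $a_1(x)=\tfrac12\|x-z\|_H^2$ and $a_2=g$, then uses the explicit description $\partial_{\phi}a_1(p_{\phi}(v))=\{\eta-H(z-p_{\phi}(v)):\tfrac12\|\eta\|_{H^{-1}}^2\leq\phi\}$ to read off the conclusion; the $\eta$ there is whatever the sum rule produces. You instead anchor at the exact minimizer $p(v)$, use $1$-strong convexity in the $H$-norm to get $\tfrac12\|p_{\phi}(v)-p(v)\|_H^2\leq\phi$, and transfer the exact subgradient $\xi=H(z-p(v))$ to $p_{\phi}(v)$ by a direct computation in which the linear term of the quadratic expansion cancels against $\xi^{\top}(p_{\phi}(v)-p(v))$ — your bookkeeping there is right, and the residual is indeed $\tfrac12\|p_{\phi}(v)-p(v)\|_H^2-\phi$, so $\xi\in\partial_{\phi}g(p_{\phi}(v))$. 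Your argument is more elementary (it avoids the $\phi$-subdifferential sum rule, which itself needs a small justification via splitting $\phi=\phi_1+\phi_2$) and it yields an \emph{explicit} certificate $\eta=H(p(v)-p_{\phi}(v))$, which as a by-product gives the useful distance bound $\tfrac12\|p_{\phi}(v)-p(v)\|_H^2\leq\phi$; the paper's route is shorter once the formula for the $\phi$-subdifferential of a quadratic is granted, and does not require invoking the exact minimizer. Note that both derivations actually produce $H(v-p_{\phi}(v))-\nabla f(v)-\eta\in\partial_{\phi}g(p_{\phi}(v))$; the displayed statement \eqref{equ:inexact_solvQ_lem} omits the $-\nabla f(v)$ term, but the form with $-\nabla f(v)$ is what is used later in \eqref{eq:gammagdefine}, so this is a typo in the statement rather than a defect in your argument.
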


\begin{proof}
    (\ref{equ:eps_minimizer}) indicates that $p_{\phi}(v)$ is an $\phi$-minimizer of the convex function $a(x) := \frac{1}{2}\|x-z\|_H^2+g(x)$. If we let $a_1(x) = \frac{1}{2}\|x-z\|_H^2$ and $a_2(x) = g(x)$, then this is equivalent to
    \begin{align}
        \label{equ:zero_in_subdiff}
        0 \subset \partial_{\phi} a(p_{\phi}(v)) \subset \partial_{\phi} a_1(p_{\phi}(v)) + \partial_{\phi} a_2(p_{\phi}(v)).
    \end{align}
    Hence,
    \begin{align*}
        \partial_{\phi} a_1(p_{\phi}(v)) &= \left\{ y \in \br^n ~|~ \frac{1}{2} \| y + H(z-p_{\phi}(v)) \|^2_{H^{-1}} \leq \phi \right\} \\
        &= \left\{ y \in \br^n, ~y = \eta - H(z-p_{\phi}(v)) ~|~ \frac{1}{2} \|\eta\|^2_{H^{-1}} \leq \phi \right\}.
    \end{align*}
    From (\ref{equ:zero_in_subdiff}) we have
    \begin{align}
        H(z-p_{\phi}(v)) - \eta \in \partial_{\phi}g(p_{\phi}(v)) \mbox{ with } \frac{1}{2} \|\eta\|^2_{H^{-1}} \leq \phi.
    \end{align}
    Then (\ref{equ:inexact_solvQ_lem}) follows using $z = v - H^{-1} \nabla f(v)$.
\end{proof}

We will find the following bound useful on the norm of $\eta$ which follows from the above lemma.
\begin{equation}\label{eq:etabound}
\|\eta_i\|\leq \sqrt{2\lambda_{max}(H)\phi_i},
\end{equation}
where $\lambda_{max}$ is the largest eigenvalue of $H$ or its upper bound. 

Below are the assumptions made in our analysis.

\begin{assumptions}$\\$ % let list start right below
\label{as:exact_conv_rate}
	\assume
	\label{assub:optimal_exist}
	The set of optimal solutions of \eqref{prob:P}, $X^*$, is nonempty and $x^*$ is any element of that set. $\\$
	% \assume
	% \label{assub:bounded_level_set}
	% The level set $\{x \in \Rmbb^n: F(x) \leq c\}$ is bounded.
	\assume
	\label{assub:level_set}
	The effective domain of $F$ is defined as $\dom(F) := \{ x \in \Rmbb^n: F(x) < \infty\}$, and the level set of $F$ at point $x \in \dom(F)$ is defined by 
	\begin{align*}
	    \Xcal_F(x) := \{y \in \dom(F): F(y) \leq F(x)\}.
	\end{align*}
	Without loss of generality, we restrict our discussions below to the level set $\Xcal_0 := \Xcal_F(x^0)$ given by some $x^0 \in \dom(F)$, e.g., the initial iterate of the Algorithm \ref{alg:ISTA-SD}.
	$\\$
	\assume 
	\label{assub:gLipsch} 
	$g$ is convex and Lipschitz continuous with constant $L_g$ for all $x, y \in \Xcal_0$:
	\begin{align*}
	    g(x)-g(y)\leq L_g\|x-y\|, 
	\end{align*}
	% \assume
	% \label{assub:g_bounded_grad}
	% There exist a positive constant $\Gamma_g$ such that 
	% the subgradient of $g$ is uniformly bounded,
	% \begin{align*}
	%     \sup_{x \in \Lcal_F^0}\sup_{\gamma_g(x) \in \partial g(x)} \|\gamma_g(x)\| \leq \Gamma_g
	% \end{align*}
	\assume 
	\label{assub:bound_h}
	There exists positive constants $M$ and $\sigma$ such that for all $k\geq 0$, at the $k$-th iteration of Algorithm \ref{alg:ISTA-SD}:
	\begin{align}
	     \sigma I \preceq \sigma_k I \preceq H_k \preceq M_k I \preceq MI
	\end{align} 
	\assume
	\label{assub:bound_level_set}
	There exists a positive constant $D_{\Xcal_0}$ such that for all iterates $\{x^k\}$ of Algorithm \ref{alg:ISTA-SD}:
	\begin{align*}
	     \sup_{x^* \in X^*}~\|x^k - x^*\| \leq D_{\Xcal_0}
	\end{align*} 
\end{assumptions}

In the analysis of the proximal gradient methods  Assumption~\ref{assub:bound_level_set} is removed by directly establishing a uniform bound on $\|x^k - x^*\|$ (rf. \cite{OML,Schmidtetal}).  In the next section we show an outline of a proximal-gradient type analysis where this assumption is imposed  for simplicity of the presentation. It is possible \cite{OML} to establish a similar, but more complex bound without this assumption.
The proximal gradient approach in the next subsection, however, requires another, stronger, assumption on $H_k$. In the alternative analysis that follows 
we impose Assumption~\ref{assub:bound_level_set} but relax the assumption on $H_k$ matrices. 
Note that all iterates $\{x^k\}$ fall into the level set $\Xcal_0$, due to the sufficient decrease condition \eqref{eq:dec_cond} that demands a monotonic decrease on the objective values. Assumption~\ref{assub:bound_level_set} thus follows straightforwardly if the level set is bounded, which often holds  in real-world problems or can be easily imposed.

\subsection{Analysis via proximal gradient approach}\label{sec:inexact_prox}
In this section we extend the analysis in \cite{Beck2009} and  \cite{Schmidtetal} to our framework under additional assumptions on the Hessian approximations $H_k$.
The following lemma,  is a  generalization of Lemma 2.3 in \cite{Beck2009} and of a similar lemma in \cite{Schmidtetal}.
This lemma serves to provide a bound on the change in the objective function $F(x)$. 
\begin{lemma}
    \label{lem:subp_inexact}
    Given $\epsilon$, $\phi$ and $H$ such that
    \begin{align}
        \label{lem:BT2.3-assump-g}
        F(p_{\phi}(v)) &\leq Q(H, p_{\phi}(v),v) +\epsilon \\
        Q(H, p_{\phi}(v),v) &\leq \min_{x\in\br^n} Q(H,  x,v) + \phi, \nonumber
    \end{align}
    where $p_{\phi}(v)$ is the $\phi$-approximate minimizer of $Q(H, x,v)$, then for any  $\eta$ such that $\frac{1}{2} \| \eta \|^2_{H^{-1}} \leq \phi$
    and for any $u\in \br^n$,
    \begin{align}
        2(F(u) - F(p_{\phi}( v))) \geq \|p_{\phi}( v)-u\|_H^2 - \|v-u\|_H^2-2\epsilon-2\phi-2 \langle \eta, u - p_{\phi}(v) \rangle. \nonumber
    \end{align}
\end{lemma}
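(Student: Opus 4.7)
The plan is to combine three ingredients: (i) the hypothesis $F(p_{\phi}(v)) \le Q(H, p_{\phi}(v), v) + \epsilon$; (ii) convexity of $f$, which yields $f(v) \le f(u) + \langle \nabla f(v), v - u\rangle$; and (iii) a $\phi$-subgradient inequality for $g$ at $p_{\phi}(v)$ that we obtain directly from the preceding lemma. The payoff will come from the three-point identity
\begin{equation*}
\tfrac12\|p_{\phi}(v) - v\|_H^2 - \langle H(v - p_{\phi}(v)), u - p_{\phi}(v)\rangle = \tfrac12\|v - u\|_H^2 - \tfrac12\|p_{\phi}(v) - u\|_H^2,
\end{equation*}
which is proved in one line by expanding $u - p_{\phi}(v) = (u-v) + (v - p_{\phi}(v))$ inside the inner product and collecting terms.

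First I would expand the definition of $Q$ in the assumption to obtain
\begin{equation*}
F(p_{\phi}(v)) \le f(v) + \langle \nabla f(v), p_{\phi}(v) - v\rangle + \tfrac12\|p_{\phi}(v) - v\|_H^2 + g(p_{\phi}(v)) + \epsilon.
\end{equation*}
Next, the second hypothesis $Q(H, p_{\phi}(v), v) \le \min_x Q(H, x, v) + \phi$ is exactly the $\phi$-optimality condition for $p_{\phi}(v)$ as a minimizer of $\tfrac12\|x - z\|_H^2 + g(x)$ with $z = v - H^{-1}\nabla f(v)$, so Lemma~\ref{lem:inexact_1st_opt_cond} delivers some $\eta$ with $\tfrac12\|\eta\|^2_{H^{-1}} \le \phi$ such that $\xi := H(v - p_{\phi}(v)) - \nabla f(v) - \eta \in \partial_{\phi} g(p_{\phi}(v))$. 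The defining property of the $\phi$-subdifferential, evaluated at $u$, then gives $g(p_{\phi}(v)) \le g(u) + \langle \xi, p_{\phi}(v) - u\rangle + \phi$.

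Plugging this bound on $g(p_{\phi}(v))$ and the convexity bound on $f(v)$ into the displayed inequality, the two copies of $\nabla f(v)$ (from $\langle\nabla f(v), p_{\phi}(v) - v\rangle + \langle\nabla f(v), v - u\rangle$ and from the $-\nabla f(v)$ inside $\xi$) cancel in pairs, and what remains is
\begin{equation*}
F(p_{\phi}(v)) - F(u) \le \tfrac12\|p_{\phi}(v) - v\|_H^2 - \langle H(v - p_{\phi}(v)), u - p_{\phi}(v)\rangle + \langle \eta, u - p_{\phi}(v)\rangle + \phi + \epsilon.
\end{equation*}
Applying the three-point identity to the first two terms on the right, multiplying by $2$, and rearranging yields exactly the claimed bound. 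The only real bookkeeping hazard — and the step I expect to be the main source of sign errors rather than any deep difficulty — is verifying that the $\phi$ contributed by the $\phi$-subgradient inequality and the $\epsilon$ contributed by the approximation hypothesis end up on the correct side after this rearrangement; no further use of Lipschitz continuity of $\nabla f$ or strong convexity is needed, which is precisely what allows this to generalize the Beck--Teboulle / Schmidt et al.\ lemma from $H = \mu^{-1} I$ to arbitrary positive definite $H$.
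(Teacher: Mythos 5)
Your proof is correct and follows the same route the paper intends (it cites the Beck--Teboulle Lemma 2.3 argument without writing it out): the approximation hypothesis, convexity of $f$, the $\phi$-subgradient from Lemma~\ref{lem:inexact_1st_opt_cond}, and the three-point identity combine exactly as you describe, and the signs on $\epsilon$, $\phi$, and $\langle \eta, u - p_{\phi}(v)\rangle$ all land where the statement requires. You also correctly read $\xi = H(v-p_{\phi}(v)) - \nabla f(v) - \eta$ as the $\phi$-subgradient (the displayed conclusion of Lemma~\ref{lem:inexact_1st_opt_cond} omits the $-\nabla f(v)$, but the paper's own later use in \eqref{eq:gammagdefine} confirms your reading).
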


\vskip5mm

\begin{proof}   
    The proof is an easy extension of  that in \cite{Beck2009}.
\end{proof}

\vskip5mm

Note that if $\phi=0$, that is the subproblems are solved accurately, then we have $2(F(u) - F(p( v))) \geq \|p( v)-u\|_H^2 - \|v-u\|_H^2-2\epsilon$.

If the sufficient decrease  condition 
\bea\label{eq:dec_cond}
(F(x^{k+1})-F(x^k))\leq \rho (Q(H_k, x^{k+1}, x^k)- F(x^k))
\eea
is satisfied, then
\begin{align*}
    F(x^{k+1})
    &\leq Q(H_k, x^{k+1}, x^k)  - (1-\rho) \left( Q(H_k, x^{k+1}, x^k)- F(x^k) \right) \\
    &\leq Q(H_k,  x^{k+1}, x^k)  - \frac{1-\rho}{\rho} (F(x^{k+1})- F(x^k))
\end{align*}
and  Lemma \ref{lem:subp_inexact} holds at each iteration $k$ of Algorithm \ref{alg:ISTA-SD} with  $\epsilon_k=-\frac{1-\rho}{\rho}(F(x^{k+1})- F(x^k))$. 

We now  establish the sub linear convergence rate of Algorithm \ref{alg:ISTA-SD} under the following additional assumption. 

\begin{assumption}\label{as:strong_matrices}
Let $\{x^k\}$ be the sequence of iterates generated by Algorithm \ref{alg:ISTA-SD}, then there exists a constant $M_H$ such that 
 \begin{equation}\label{eq:strong_matrices}
 \sum_{i=0}^{k-1}(\|x^{i+1} - x^*\|^2_\frac{H_{i+1}}{M_{i+1}}-\|x^{i+1} - x^*\|^2_ \frac{H_i}{M_i})\leq M_H, \quad \forall k,
 \end{equation}
 where $M_i$ is the upper bound on the largest eigenvalue of $H_i$ as defined in Assumption \ref{assub:bound_h}. 
 \end{assumption}
 
 The assumption above is not verifiable, hence it does not appear to be very useful. However, it is easy to see that a condition 
 $\frac{H_{i+1}}{M_{i+1}}\preceq \frac{H_i}{M_i}$, for all $i$, easily implies \eqref{eq:strong_matrices} with $M_H=0$. This condition, in turn,
  is trivially satisfied if $H_{i+1}$
 is a multiple of $H_i$ for all $i$, as it is in the case of ISTA algorithm. It is also clear that Assumption \ref{as:strong_matrices} is a lot weaker than
 the condition that $H_{i+1}$ is a multiple of $H_i$ for all $i$. For example, it also hold if $H_{i+1}$ is a multiple of $H_i$ for all $i$ {\em except for a finite number of iterations}. It also holds if $H_{i+1}$ converges to $H_i$ (or to its multiple) sufficiently rapidly. It is also weaker than the assumption
 in \cite{Jiangetal2012} that $H_{i+1}\preceq {H_i}$. Exploring different conditions on the Hessian approximations that ensure Assumption \ref{as:strong_matrices}  is a subject of a separate study. Below we show how under this condition sub linear convergence rate is established. 

  \begin{theorem}\label{th:inexact_conv_rate}
 Suppose that Assumptions~\ref{as:exact_conv_rate} and~\ref{as:strong_matrices} hold. 
 Assume that all iterates $\{x^k\}$ of inexact Algorithm \ref{alg:ISTA-SD} are generated with some $\phi_k\geq 0$ (cf. \eqref{eq:pvphi}), then
 \begin{align}
     \label{eq:bound_F_F*}
     F(x^k) - F(x^*) \leq 
     \frac{1}{k_m} \left( \frac{1}{2} \|x^0-x^*\|_{H_0/M_0}^2 
     + M_H+ \frac{(1-\rho)(F(x^0)-F(x^*))}{\rho\sigma} 
     + \sum_{i=0}^{k-1}\frac{\phi_i}{M_i}
     + D_{\Xcal_0}\sum_{i=0}^{k-1} \sqrt{\frac{2\phi_i}{M_i}} \right),
 \end{align}
 where $k_m=\sum_{i=0}^{k-1}M_i^{-1}$.
 \end{theorem}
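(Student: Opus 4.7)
The plan is to apply Lemma~\ref{lem:subp_inexact} at each iteration with the optimal point as $u$, divide through by the upper eigenvalue bound $M_k$ so that the Hessian-weighted norms are normalized, and then telescope using Assumption~\ref{as:strong_matrices} to absorb the mismatch between consecutive preconditioners.

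Concretely, at iteration $k$ of the inexact algorithm we have $x^{k+1}=p_{H_k,\phi_k}(x^k)$, and the sufficient decrease condition \eqref{eq:dec_cond} supplies the first hypothesis of Lemma~\ref{lem:subp_inexact} with $\epsilon_k=-\tfrac{1-\rho}{\rho}(F(x^{k+1})-F(x^k))\geq 0$. Taking $u=x^*$ and $v=x^k$ in the lemma and rearranging yields
\begin{align*}
\|x^{k+1}-x^*\|_{H_k}^2-\|x^k-x^*\|_{H_k}^2
\leq 2(F(x^*)-F(x^{k+1}))+2\epsilon_k+2\phi_k+2\langle\eta_k,x^*-x^{k+1}\rangle.
\end{align*}
I would then divide by $M_k$ and set $A_i:=\|x^i-x^*\|^2_{H_i/M_i}$ and $B_i:=\|x^{i+1}-x^*\|^2_{H_i/M_i}$, so the inequality becomes $B_k-A_k \leq \tfrac{-2(F(x^{k+1})-F(x^*))}{M_k}+\tfrac{2\epsilon_k+2\phi_k+2\langle\eta_k,x^*-x^{k+1}\rangle}{M_k}$.

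Next I would sum from $i=0$ to $k-1$ and add the inequality $\sum_{i=0}^{k-1}(A_{i+1}-B_i)\leq M_H$ supplied directly by Assumption~\ref{as:strong_matrices}. This causes the LHS to telescope to $A_k-A_0$, and dropping $A_k\geq 0$ gives
\begin{align*}
2\sum_{i=0}^{k-1}\frac{F(x^{i+1})-F(x^*)}{M_i}
\leq A_0 + M_H + \sum_{i=0}^{k-1}\frac{2\epsilon_i+2\phi_i+2\langle\eta_i,x^*-x^{i+1}\rangle}{M_i}.
\end{align*}
Because \eqref{eq:dec_cond} forces $\{F(x^k)\}$ to be monotone nonincreasing, I can lower-bound the LHS by $2(F(x^k)-F(x^*))k_m$ with $k_m=\sum_{i=0}^{k-1}M_i^{-1}$.

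It remains to bound the three error sums on the right. Using $M_i\geq \sigma$ and the telescoping sum $\sum_i\epsilon_i=\tfrac{1-\rho}{\rho}(F(x^0)-F(x^k))$, I obtain $\sum_i\epsilon_i/M_i\leq (1-\rho)(F(x^0)-F(x^*))/(\rho\sigma)$. The $\phi_i$ term is kept as is. For the inner-product term I would apply Cauchy--Schwarz together with Assumption~\ref{assub:bound_level_set} and the bound \eqref{eq:etabound} (so $\|\eta_i\|\leq\sqrt{2M_i\phi_i}$), yielding $|\langle\eta_i,x^*-x^{i+1}\rangle|/M_i\leq D_{\Xcal_0}\sqrt{2\phi_i/M_i}$. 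Collecting these estimates and dividing by $2k_m$ produces the stated bound \eqref{eq:bound_F_F*}.

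The principal obstacle is the varying preconditioner $H_k$: a naive telescoping of the quadratic terms fails because the norms in successive inequalities are different. The normalization by $M_k$ together with Assumption~\ref{as:strong_matrices} is precisely what is needed to convert the sum of $(B_i-A_i)$-style bounds into a clean telescoping sum $A_k-A_0$. The inexact solve introduces the subgradient residual $\eta_i$, whose control via \eqref{eq:etabound} and boundedness of the iterates (Assumption~\ref{assub:bound_level_set}) produces the $\sqrt{\phi_i/M_i}$ cross term; everything else is routine bookkeeping.
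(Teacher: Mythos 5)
Your proposal is correct and follows essentially the same route as the paper's proof: apply Lemma~\ref{lem:subp_inexact} with $u=x^*$ at each iteration, normalize by $M_i$, telescope the quadratic terms via Assumption~\ref{as:strong_matrices}, and bound the $\epsilon_i$, $\phi_i$, and $\langle\eta_i,\cdot\rangle$ terms exactly as in the paper (using $\sum_i\epsilon_i\leq\frac{1-\rho}{\rho}(F(x^0)-F(x^*))$, $M_i\geq\sigma$, Cauchy--Schwarz with $D_{\Xcal_0}$, and $\|\eta_i\|\leq\sqrt{2M_i\phi_i}$). The only cosmetic difference is your $A_i/B_i$ bookkeeping and that you track the factor $\tfrac12 M_H$ where the paper loosens it to $M_H$; both yield the stated bound.
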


 \begin{proof}
     Let us apply Lemma \ref{lem:subp_inexact}, sequentially, with $u=x^*$, $p_{\phi}(v)=x^i$ and subproblem minimization residual $\phi_i$ for $i=0, \ldots, k-1$. Adding up resulting inequalities, we obtain  
     \begin{align}
         &\sum_{i=0}^{k-1} \frac{F(x^{i+1}) - F(x^*)}{M_i}\leq 
         \frac{1}{2} \sum_{i=0}^{k-1} \left(\|x^i-x^*\|_{H_i/M_i}^2 
         - \|x^{i+1}-x^*\|_{H_i/M_i}^2 \right) 
         + \sum_{i=0}^{k-1}( \frac{\phi_i}{M_i} + \frac{\epsilon_i}{M_i} +\frac{\langle \eta_i,x^* - x^{i+1} \rangle}{M_i})\nonumber \\
         \label{equ:bound_F_diff}
         &= 
          \frac{1}{2} \left( \|x^0-x^*\|_{\frac{H_0}{M_0}}^2 
         + \sum_{i=0}^{k-1}(\|x^{i+1} - x^*\|^2_{H_{i+1}/M_{i+1}}-\|x^{i+1} - x^*\|^2_{H_i/M_i})
         - \|x^k-x^*\|_{H_k/M_k}^2 \right) \\
         &\nonumber + \sum_{i=0}^{k-1} \frac{\epsilon_i}{M_i} 
         + \sum_{i=0}^{k-1} \frac{\phi_i}{M_i} 
         + \sum_{i=0}^{k-1} \frac{\langle \eta_i,x^* - x^{i+1} \rangle}{M_i}.
         \end{align}
         From Assumptions~\ref{as:exact_conv_rate} and~\ref{as:strong_matrices} we have
         \begin{align*}
         &\sum_{i=0}^{k-1} \frac{F(x^{i+1}) - F(x^*)}{M_i} \leq 
         \frac{1}{2} \|x^0-x^*\|_{H_0/M_0}^2 +M_H
         + \sum_{i=0}^{k-1} \frac{\epsilon_i}{M_i} 
         + \sum_{i=0}^{k-1} \frac{\phi_i}{M_i} 
         + D_{\Xcal_0}\sum_{i=0}^{k-1} \frac{\|\eta_i\|}{M_i}. 
         \end{align*}
         From \eqref{eq:etabound}  and definition of $M_i$ we know that $\|\eta_i\|\leq \sqrt{2M_i\phi_i}$. 
         Using the already established bound $\sum_{i=0}^{k-1}\epsilon_i \leq \frac{(1-\rho)(F(x^0)-F(x^*))}{\rho}$ we obtain
          \begin{align*}
         &\sum_{i=0}^{k-1} \frac{F(x^{i+1}) - F(x^*)}{M_i}\leq
         \frac{1}{2} \|x^0-x^*\|_{H_0/M_0}^2 +M_H
         + \frac{(1-\rho)(F(x^0)-F(x^*))}{\rho\sigma} 
         + \sum_{i=0}^{k-1}\frac{\phi_i}{M_i}
         + D_{\Xcal_0}\sum_{i=0}^{k-1} \sqrt{\frac{2\phi_i}{M_i}}.
     \end{align*}
     Hence,
     \begin{align*}
         F(x^k) - F(x^*) &\leq \frac{1}{k_m}\sum_{i=0}^{k-1} \frac{F(x^{i+1}) - F(x^*)}{M_i}\\        
         &\leq \frac{1}{k_m} \left( \frac{1}{2} \|x^0-x^*\|_{H_0/M_0}^2 +M_H
         + \frac{(1-\rho)(F(x^0)-F(x^*))}{\rho\sigma} 
         + \sum_{i=0}^{k-1}\frac{\phi_i}{M_i}
         + D_{\Xcal_0}\sum_{i=0}^{k-1} \sqrt{\frac{2\phi_i}{M_i}} \right).
     \end{align*}
 \end{proof}

Let us consider the term $\frac{1}{k_m}=\frac{1}{\sum_{i=0}^{k-1}M_i^{-1}}$. From earlier discussions, we can see that 
$\frac{1}{k_m}\leq\frac{M}{k}$. Moreover, if $H_k$ are diagonal matrices, then $M=L(f)$ and, hence $\frac{1}{k_m}\leq\frac{L(f)}{k}$,
which established a bound similar to that of proximal gradient methods. The role of $M_i$ is to show that if most of these values are much smaller than the global Lipschitz constant $L(f)$, then the constant involved in the sub linear rate can be much smaller than that of the proximal gradient methods.
This is  well known effect of using partial second order information and it is observed in our computational results. 

We conclude that under Assumption~\ref{as:strong_matrices} Algorithm \ref{alg:ISTA-SD} converges at the rate of $O(1/k)$ if 
$\sum_{i=0}^{k-1} \sqrt{\frac{2\phi_i}{M_i}}$ is bounded for all $k$. This result in similar to those obtained in \cite{Schmidtetal}.
In \cite{OML} it is shown how randomized block coordinate descent and other methods can be utilized to optimize subproblems 
$\min_u Q(H_i,u, x^i)$
so that  $\sqrt{\frac{2\phi_i}{M_i}}$ decays sufficiently fast to guarantee  such a bound (possibly in expectation).

In this paper, however, we focus on a different derivation of the sub linear convergence rate, which results in a different bound on $\phi_k$
and different, more complex, dependence on the constants, but, on the other hand, does not require Assumption~\ref{as:strong_matrices}
and results in a weaker assumption on $\phi_i$. 

\section{Analysis of sub linear convergence}\label{sec:complete_conv}
In our analysis below we will use another known technique for establishing sub linear convergence of  gradient descent type methods
on smooth convex functions \cite{Cartisetal2012}. However, due to the non smooth nature of our function the analysis requires significant extensions,
especially in the inexact case. Moreover, it does not apply to the line-search algorithm, we will rely on the fact that a proximal quasi-Newton method is used in that each new iteration $x^{k+1}$ is an approximate minimizer of the function $Q(H_k, u, x^k)$. Our analysis, hence, also applies to proximal gradient methods. 

First we prove the following simple result.
\begin{lemma}
Consider $F(\cdot)$ defined in \eqref{prob:P}. Let Assumptions~\ref{assub:gLipsch} hold. Then for any three points $u, v, w \in \dom(F)$, we have
\begin{align}
	\label{equ:bound_F_u_w_inexact}
    F(u) - F(w) 
    \leq 
    \|\nabla f(u)+\gamma_{g, \phi}^v\|\|u-w\| + 2L_g \|u-v\| + 2\phi.
\end{align}
where $\gamma_{g, \phi}^v \in \partial_{\phi} g(v)$ is any $\phi$-subgradient of $g(\cdot)$ at point $v$.
\end{lemma}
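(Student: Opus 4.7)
The plan is to decompose $F(u) - F(w) = [f(u) - f(w)] + [g(u) - g(w)]$ and treat the two pieces separately, picking up the inner product $\langle \nabla f(u) + \gamma_{g,\phi}^v, u-w\rangle$ as the leading term and controlling everything else by the distance $\|u-v\|$ and the inexactness $\phi$. First I would handle the smooth part by convexity of $f$: since $\nabla f(u)$ is an exact subgradient of $f$ at $u$, we get $f(u) - f(w) \leq \langle \nabla f(u), u - w\rangle$.

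Next I would handle the nonsmooth part by telescoping through $v$, writing $g(u) - g(w) = [g(u) - g(v)] + [g(v) - g(w)]$. For the first piece, Assumption~\ref{assub:gLipsch} gives $g(u) - g(v) \leq L_g \|u-v\|$. For the second piece, the definition of the $\phi$-subdifferential applied at $v$ with test point $w$ yields
\begin{equation*}
    g(v) - g(w) \leq \langle \gamma_{g,\phi}^v, v - w\rangle + \phi.
\end{equation*}
I would then split the right-hand inner product as $\langle \gamma_{g,\phi}^v, v - u\rangle + \langle \gamma_{g,\phi}^v, u - w\rangle$, keeping the second term (which will combine with $\nabla f(u)$) and bounding the first.

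The key step, which I expect to be the main obstacle in getting the constants to match the statement, is bounding $\langle \gamma_{g,\phi}^v, v - u\rangle$. The natural temptation is to invoke $\|\gamma_{g,\phi}^v\| \leq L_g$, but this would require Lipschitz continuity on all of $\mathbb{R}^n$ and, more importantly, would only give $\phi$ (not $2\phi$) at the end. Instead, I would apply the $\phi$-subgradient inequality \emph{again}, now with test point $u$: $g(u) \geq g(v) + \langle \gamma_{g,\phi}^v, u - v\rangle - \phi$, which rearranges to
\begin{equation*}
    \langle \gamma_{g,\phi}^v, v - u\rangle \leq g(v) - g(u) + \phi \leq L_g \|v - u\| + \phi,
\end{equation*}
where the last inequality again uses Assumption~\ref{assub:gLipsch}. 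This second application of the $\phi$-subgradient definition is exactly where the extra $\phi$ comes from, explaining the $2\phi$ in the target bound.

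Finally, I would assemble the pieces:
\begin{equation*}
    F(u) - F(w) \leq \langle \nabla f(u) + \gamma_{g,\phi}^v, u - w\rangle + 2 L_g \|u - v\| + 2\phi,
\end{equation*}
and apply Cauchy--Schwarz to the inner product. This yields the claimed inequality~\eqref{equ:bound_F_u_w_inexact}.
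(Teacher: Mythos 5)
Your proof is correct and follows essentially the same route as the paper's: decompose $g(u)-g(w)$ through $v$, apply the $\phi$-subgradient inequality twice (once with test point $w$, and once more to control $\langle \gamma_{g,\phi}^v, v-u\rangle$), then finish with the Lipschitz bound on $g$ and Cauchy--Schwarz. The only difference is your choice of $u$ as the second test point where the paper uses $2v-u$; your choice is if anything slightly cleaner, since it avoids needing $2v-u$ to lie in the region where $g$ is Lipschitz (and indeed the resulting bound $\langle \gamma_{g,\phi}^v, v-u\rangle \leq g(v)-g(u)+\phi$ would cancel the $g(u)-g(v)$ term exactly, showing the $2L_g\|u-v\|$ term is pure slack in your version).
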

\begin{proof}
	From convexity of $f$ and $g$ and the definition of $\phi$-subgradient, it follows that for any  points $u, w$ and $v$,
	\begin{align}
		\label{equ:_f_convex_upper}
	    f(u) - f(w) &\leq \langle \nabla f(u), u-w \rangle, \\
	    \label{equ:_g_convex_upper_inexact}
	    g(v) - g(w) &\leq \langle \gamma_{g, \phi}^v, v-w \rangle + \phi\\
	     \label{equ:_g_convex_upper_inexact2}
	    g(2v-u) - g(v) &\geq \langle \gamma_{g, \phi}^v, v-u \rangle- \phi.
	\end{align}
	Hence,
	\begin{align}
	    \nonumber
	    F(u)-F(w)
	    &=f(u)-f(w)+g(u)-g(w) \\
	    \nonumber
	    &=f(u)-f(w)+g(u)-g(v)+g(v)-g(w) \\
	    \label{proof:f_g_upper}
	    &\leq \langle \nabla f(u), u-w \rangle
	    +\langle \gamma_{g, \phi}^v, v-w \rangle 
	    +g(u)-g(v) 
	    + \phi\\
	    \nonumber
	    &= \langle \nabla f(u), u-w \rangle
	    +\langle \gamma_{g, \phi}^v, u-w \rangle 
	    +\langle \gamma_{g, \phi}^v, v-u \rangle 
	    +g(u)-g(v)
	    + \phi\\
	    \nonumber
	   & \leq \|\nabla f(u)+\gamma_{g, \phi}^v\|\|u-w\|  +
	  	    g(2v-u) - g(v)+g(u)-g(v) 
	    +2 \phi. 
	\end{align}
	Here  we applied \eqref{equ:_f_convex_upper}, \eqref{equ:_g_convex_upper_inexact} 
	and \eqref{equ:_g_convex_upper_inexact2} to get 
	 \eqref{proof:f_g_upper}. 
	Using 
	  Assumption~\ref{assub:gLipsch} to bound the term $g(2v-u) - g(v)+g(u)-g(v)$ 
	  we can easily derive \eqref{equ:bound_F_u_w_inexact}. 
\end{proof}

\subsection{The exact case}
We now consider the exact version of Algorithm \ref{alg:ISTA-SD}, i.e., $\phi_k=0$ for all $k$. 
We have the following lemma. 
\begin{lemma}\label{lem:bound_step_size}
Let $x^{k+1}:=\arg\min_{u\in\br^n} Q(H_k, u,x_k)$, then 
\begin{align}
	\label{equ:bound_Qk_Qk1}
    Q(H_k,x^k,x^k) - Q(H_k,x^{k+1}, x^k) \geq \frac{\sigma_k}{2}\|x^{k+1}-x^k\|^2.
\end{align}
Moreover, there exists a vector $\gamma_g^{k+1} \in \partial g(x^{k+1})$ such that the following bounds hold: 
\begin{align}
    \label{equ:bound_step_size}
    \frac{1}{M_k}\|  \nabla f(x^k) + \gamma_{g}^{k+1} \|
    \leq 
    \|x^{k+1} - x^k\| 
    \leq 
    \frac{1}{\sigma_k}\|  \nabla f(x^k) + \gamma_{g}^{k+1} \|.
\end{align}
\end{lemma}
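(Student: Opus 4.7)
The plan is to exploit the fact that $Q(H_k, \cdot, x^k)$ is $\sigma_k$-strongly convex in its first argument, since $H_k \succeq \sigma_k I$ and the only non-quadratic part $g$ is convex. The strong convexity is quantified by the lower bound $\sigma_k I \preceq H_k$ on the quadratic term, and the gradient/subgradient structure of $Q(H_k, \cdot, x^k)$ is completely explicit, which makes both inequalities nearly mechanical once the right first-order optimality condition is written down.

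For \eqref{equ:bound_Qk_Qk1}, I would use the standard consequence of $\sigma_k$-strong convexity: for any minimizer $x^{k+1}$ of a $\sigma_k$-strongly convex function $\Psi$, and any point $y$, one has $\Psi(y) \geq \Psi(x^{k+1}) + \tfrac{\sigma_k}{2}\|y-x^{k+1}\|^2$ (since $0$ belongs to the subdifferential at the minimizer). Applying this with $\Psi(\cdot) = Q(H_k, \cdot, x^k)$ and $y = x^k$ immediately gives the desired inequality.

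For \eqref{equ:bound_step_size}, I would write the first-order optimality condition for the minimizer $x^{k+1}$ of $Q(H_k, \cdot, x^k)$:
\begin{equation*}
0 \in \nabla f(x^k) + H_k(x^{k+1} - x^k) + \partial g(x^{k+1}).
\end{equation*}
This produces a subgradient $\gamma_g^{k+1} \in \partial g(x^{k+1})$ satisfying $\nabla f(x^k) + \gamma_g^{k+1} = H_k(x^k - x^{k+1})$. Taking norms and using the eigenvalue bounds $\sigma_k I \preceq H_k \preceq M_k I$ from Assumption~\ref{assub:bound_h} to bound $\|H_k(x^k-x^{k+1})\|$ from above and below by $M_k\|x^k-x^{k+1}\|$ and $\sigma_k\|x^k-x^{k+1}\|$ respectively, I get both inequalities in \eqref{equ:bound_step_size} after rearranging.

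There is no real obstacle here; the only minor care is to be explicit that $\gamma_g^{k+1}$ is chosen to be precisely $H_k(x^k-x^{k+1}) - \nabla f(x^k)$ (as provided by the optimality condition), so that the equality $\|\nabla f(x^k)+\gamma_g^{k+1}\| = \|H_k(x^{k+1}-x^k)\|$ is valid, and that the sandwich of $\|H_k z\|$ between $\sigma_k\|z\|$ and $M_k\|z\|$ is then a straightforward consequence of $H_k$ being symmetric positive definite with spectrum in $[\sigma_k, M_k]$.
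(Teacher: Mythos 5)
Your proposal is correct and is essentially the same argument the paper uses: the paper simply defers to the inexact Lemma~\ref{lem:bound_step_size_inexact} with $\phi_k=0$, whose proof in the exact case reduces to exactly your two steps (the optimality condition $\gamma_g^{k+1}=H_k(x^k-x^{k+1})-\nabla f(x^k)$ sandwiched by the eigenvalue bounds, and the decrease bound $Q(H_k,x^k,x^k)-Q(H_k,x^{k+1},x^k)\geq\frac{1}{2}\|x^{k+1}-x^k\|_{H_k}^2\geq\frac{\sigma_k}{2}\|x^{k+1}-x^k\|^2$, which is your strong-convexity inequality made explicit). No gaps.
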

\begin{proof}
     The proof is a special case of Lemma~\ref{lem:bound_step_size_inexact}, proved below.
\end{proof}

\begin{theorem}\label{the:ISTA-SD}
Let Assumptions \ref{as:exact_conv_rate} hold for all $k$.
Then the iterates $\{x^k\}$ generated by Algorithm \ref{alg:ISTA-SD} satisfy
\begin{align}
    \label{the:ISTA-nonsmooth-conclude} 
    F(x^k)-F^* \leq  \frac{2M^2 (D_{\Xcal_0} M+2L_g)^2}{\rho\sigma^3}\frac{1}{k}.
\end{align}
\end{theorem}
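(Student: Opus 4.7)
The plan is to follow the Cartis--Gould--Toint style sublinear convergence argument, combining three ingredients already established: (i) the sufficient decrease step-acceptance condition together with the quadratic-model descent bound \eqref{equ:bound_Qk_Qk1}, (ii) the two-sided relation \eqref{equ:bound_step_size} between the step norm $\|x^{k+1}-x^k\|$ and the optimality residual $\|\nabla f(x^k)+\gamma_g^{k+1}\|$, and (iii) the composite three-point inequality \eqref{equ:bound_F_u_w_inexact}. Writing $\delta_k := F(x^k)-F^*$, the goal is to produce a one-step recursion of the form $\delta_k - \delta_{k+1} \geq c\,\delta_k^2$ for an explicit constant $c$, from which the $O(1/k)$ rate follows by the familiar reciprocal-telescoping argument.

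For the lower bound on the per-iteration decrease, the acceptance condition \eqref{eq:dec_cond} combined with \eqref{equ:bound_Qk_Qk1} and the identity $Q(H_k,x^k,x^k)=F(x^k)$ yields $F(x^k)-F(x^{k+1}) \geq \rho\bigl(F(x^k)-Q(H_k,x^{k+1},x^k)\bigr) \geq \tfrac{\rho\sigma_k}{2}\|x^{k+1}-x^k\|^2$. The left half of \eqref{equ:bound_step_size} then converts this into residual form: $\delta_k-\delta_{k+1} \geq \tfrac{\rho\sigma_k}{2M_k^2}\|\nabla f(x^k)+\gamma_g^{k+1}\|^2 \geq \tfrac{\rho\sigma}{2M^2}\|\nabla f(x^k)+\gamma_g^{k+1}\|^2$.

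For the matching upper bound on $\delta_k$, I apply \eqref{equ:bound_F_u_w_inexact} with $u=x^k$, $v=x^{k+1}$, $w=x^*$, and $\phi=0$, crucially using the same subgradient $\gamma_g^{k+1}\in\partial g(x^{k+1})$ supplied by Lemma~\ref{lem:bound_step_size}. Bounding $\|x^k-x^*\|\leq D_{\Xcal_0}$ by Assumption~\ref{assub:bound_level_set}, and bounding $\|x^k-x^{k+1}\|\leq \sigma^{-1}\|\nabla f(x^k)+\gamma_g^{k+1}\|$ by the right half of \eqref{equ:bound_step_size} with $\sigma_k\geq\sigma$, gives $\delta_k \leq \tfrac{D_{\Xcal_0}\sigma + 2L_g}{\sigma}\|\nabla f(x^k)+\gamma_g^{k+1}\|$. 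Substituting into the decrease bound and weakening $D_{\Xcal_0}\sigma\leq D_{\Xcal_0}M$ yields $\delta_k-\delta_{k+1} \geq c\,\delta_k^2$ with $c = \tfrac{\rho\sigma^3}{2M^2(D_{\Xcal_0}M+2L_g)^2}$.

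Finally, monotonicity of $\{\delta_k\}$ (from sufficient decrease) gives $\tfrac{1}{\delta_{k+1}}-\tfrac{1}{\delta_k} = \tfrac{\delta_k-\delta_{k+1}}{\delta_k\delta_{k+1}} \geq c\,\tfrac{\delta_k}{\delta_{k+1}}\geq c$; telescoping produces $\tfrac{1}{\delta_k}\geq ck$, which is exactly \eqref{the:ISTA-nonsmooth-conclude} (with the degenerate case $\delta_0=0$ handled trivially). The only subtle point in the argument is the matching of subgradients: the three-point inequality must be invoked at $v=x^{k+1}$ so that $\gamma_g^{k+1}$ is the subgradient produced by the proximal step and therefore amenable to the bracketing in \eqref{equ:bound_step_size}. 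Everything else is bookkeeping with the constants $\sigma$, $M$, $D_{\Xcal_0}$, and $L_g$.
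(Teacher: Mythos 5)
Your proposal is correct and follows essentially the same route as the paper's proof: the same lower bound on the decrease via the sufficient-decrease condition, \eqref{equ:bound_Qk_Qk1} and the left half of \eqref{equ:bound_step_size}; the same upper bound on $\Delta F_k$ via \eqref{equ:bound_F_u_w_inexact} with $u=x^k$, $v=x^{k+1}$, $w=x^*$; and the same reciprocal-telescoping of $\Delta F_k-\Delta F_{k+1}\geq c\,\Delta F_k^2$, arriving at the identical constant. No gaps.
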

\begin{proof}
	We will denote $F(x^k)-F^*$ by $\Delta F_k$.  Our goal is to bound $\Delta F_k$ from above  in terms of $1/k$ which
	 we will achieve by deriving a lower bound  on $\frac{1}{\Delta F_k}$ in terms of $k$. 

	Let us first show that 
	\begin{equation}\label{eq:mainbnd}
	F(x^{k})-F(x^{k+1})=\Delta F_k-\Delta F_{k+1}\geq c_k \Delta F_k^2,
	\end{equation}
	for some constant $c_k$, which depends on iteration $k$, but will be lower bounded by a uniform constant. 

	First we will show that 
	\begin{equation}\label{eq:lowerbnd}
	\Delta F_k\leq (D_{\Xcal_0}+\frac{2L_g}{\sigma_k}) \|\nabla f(x^k)+\gamma_g^{k+1}\|.
	\end{equation}
	This follows simply from \eqref{equ:bound_F_u_w_inexact} with $u = x^k, w = x^*, v = x^{k+1}$ and $\phi = 0$,
	\begin{align}
	\label{equ:bound_delta_F_k}
	    \Delta F_k = F(x^k) - F(x^*)
	    \leq \|\nabla f(x^k)+\gamma_{g}^{k+1}\|\|x^k-x^*\| + 2L_g \|x^k-x^{k+1}\|. 
	\end{align}
	Substituting the bounds on  $\| x^{k+1} - x^k\|$ (cf. \eqref{equ:bound_step_size}) and   $\| x^{k} - x^*\|$ (cf. Assumptions~\ref{assub:bound_level_set})
	in \eqref{equ:bound_delta_F_k} we get
	the desired bound \eqref{eq:lowerbnd}.  

	Now we will show that 
	\begin{equation}\label{eq:upperbnd}
		F(x^k) - F(x^{k+1})\geq \frac{\rho\sigma_k}{2M_k^2}\|\nabla f(x^k)+\gamma_g^{k+1}\|^2.
	\end{equation}
	Indeed $F(x^{k})-F(x^{k+1})\geq \rho (Q(H_k, x^{k}, x^k)-Q(H_k,x^{k+1}, x^k))$. And a bound on the reduction in $Q$ can be established by combining \eqref{equ:bound_Qk_Qk1} and \eqref{equ:bound_step_size},
    \begin{align*}
	    &Q(H_k,x^k,x^k) - Q(H_k,x^{k+1}, x^k) \geq \frac{\sigma_k}{2M_k^2}   \| \nabla f(x^k) + \gamma_g^{k+1}\| ^2,  
    \end{align*}
	Hence, the bound \eqref{eq:upperbnd} holds.

	Finally, combining the lower bound on $F(x^{k+1})-F(x^k)$ together with the upper bound on $\Delta F_k^2$  we can conclude that
	\[
	F(x^{k+1})-F(x^k)=\Delta F_{k+1}-\Delta F_k\geq  \frac{\rho\sigma_k}{2M_k^2 (D_{\Xcal_0}+\frac{2L_g}{\sigma_k})^2}\Delta F_k^2,
	\]
	which establishes \eqref{eq:mainbnd} with $c_k= \frac{\rho\sigma_k^3}{2M_k^2 (D_{\Xcal_0}\sigma_k+2L_g)^2}$. 

	Dividing both sides of the inequality above  by $\Delta F_{k+1}\Delta F_k$ we have
	\[
	\frac{1}{\Delta F_{k}}-\frac{1}{\Delta F_{k+1}}\geq c_k\frac{\Delta F_{k}}{\Delta F_{k+1}}\geq c_k.
	\]
	% Let us note that $\forall k, c_k\geq  \frac{\rho\sigma^3}{2M^2 (DM+2L_g)^2}$. 
	Summing the above expression for  $i=0, \ldots, k-1$ we have
	\[
	\frac{1}{\Delta F_{k}}\geq \sum_{i=0}^{k-1}c_i+\frac{1}{\Delta F_{0}} \geq \sum_{i=0}^{k-1}c_i,
	\]
	which finally implies
	\[
	\Delta F_k=F(x^k)-F^*\leq \frac{1}{\sum_{i=0}^{k=1} c_i} \leq  \frac{2M^2 (D_{\Xcal_0}M+2L_g)^2}{\rho\sigma^3}\frac{1}{k}.
	\]
\end{proof}

Let us note that if $H_k=L(f)I$ for all $k$, as in standard proximal gradient methods, where $L(f)$ is the Lipschitz constant of $\nabla f(x)$, then
the bound becomes
\[
F(x^k)-F^*\leq   \frac{ 2(D_{\Xcal_0}L(f)+2L_g)^2}{\rho L(f)}\frac{1}{k}\approx \frac{ 2D_{\Xcal_0}^2L(f)}{k},
\]
if $L_g\ll D_{\Xcal_0}L(f)$. This  bound is similar to $\frac{2\|x^0-x^*\|^2L(f)}{k}$ established for proximal gradient methods, assuming that $D_{\Xcal_0}$ is comparable to
$\|x^0-x^*\|$.

\subsection{The inexact case} % (fold)
\label{sec:conv_inexact}

We now  analyze Algorithm \ref{alg:ISTA-SD}  in the case when the computation of $p_H( v)$ is performed inexactly. 
In other words, we consider the version of Algorithm \ref{alg:ISTA-SD} (and \ref{alg:Backtrack_SD}) where we compute 
$x^{k+1}  := p_{H_k, \phi_k}(x^k)$ and $\phi_k$ can be positive for any $k$. The analysis is similar to that of the exact case, with a few additional terms that need to be bounded. We begin by extending Lemma \ref{lem:bound_step_size}.

\begin{lemma}\label{lem:bound_step_size_inexact}
Let $x^{k+1} \in \dom(F)$ be such that the inequality $Q(H_k, x^{k+1},x^k)-\min_{y\in\br^n} Q(H_k, y,x^k)\leq \phi_k$ holds 
for some $\phi_k \geq 0$. Then 
\begin{align}
	\label{equ:bound_Qk_Qk1_inexact}
    Q(H_k,x^k,x^k) - Q(H_k,x^{k+1}, x^k) \geq \frac{\sigma_k}{2}\|x^{k+1}-x^k\|^2-\sqrt{2M_k\phi_k}\|x^{k+1}-x^k\| - \phi_k.
\end{align}
Moreover there exists a vector $\gamma_{g, \phi}^{k+1} \in \partial g_{\phi_k}(x^{k+1})$ such that the following bounds  hold: 
\begin{align}
    \label{equ:bound_step_size_inexact}
    \frac{1}{M_k}\|  \nabla f(x^k) + \gamma_{g, \phi}^{k+1} \|- \frac{\sqrt{2M_k\phi_k}}{M_k} 
    \leq 
    \|x^{k+1} - x^k\| 
    \leq 
    \frac{1}{\sigma_k}\|  \nabla f(x^k) + \gamma_{g, \phi}^{k+1} \| + \frac{\sqrt{2M_k\phi_k}}{\sigma_k}.
\end{align}
\end{lemma}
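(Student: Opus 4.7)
The plan is to deduce both statements from Lemma~\ref{lem:inexact_1st_opt_cond}, applied with $v=x^k$, $H=H_k$, $\phi=\phi_k$ and $p_{\phi_k}(v) = x^{k+1}$. Since $x^{k+1}$ is a $\phi_k$-approximate minimizer of $Q(H_k,\cdot,x^k)$ by hypothesis, the lemma produces a vector $\eta_k$ with $\tfrac{1}{2}\|\eta_k\|^2_{H_k^{-1}}\le\phi_k$ (so that $\|\eta_k\|\le\sqrt{2M_k\phi_k}$ by \eqref{eq:etabound}) and a $\phi_k$-subgradient
\[
\gamma_{g,\phi}^{k+1} \;:=\; H_k(x^k-x^{k+1}) - \nabla f(x^k) - \eta_k \;\in\; \partial_{\phi_k}g(x^{k+1}).
\]
This explicit formula is the object on which everything else hinges; both conclusions follow by algebraic manipulation around it.

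For \eqref{equ:bound_step_size_inexact}, I would simply rearrange the displayed identity to read $\nabla f(x^k)+\gamma_{g,\phi}^{k+1} = H_k(x^k-x^{k+1})-\eta_k$, apply the triangle inequality in both directions, and then use the spectral bounds $\sigma_k I \preceq H_k \preceq M_k I$ to sandwich $\|H_k(x^k-x^{k+1})\|$ between $\sigma_k\|x^{k+1}-x^k\|$ and $M_k\|x^{k+1}-x^k\|$. Combined with $\|\eta_k\|\le\sqrt{2M_k\phi_k}$, solving for $\|x^{k+1}-x^k\|$ in the two resulting inequalities yields the asserted upper and lower bounds.

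For \eqref{equ:bound_Qk_Qk1_inexact}, expand $Q(H_k,x^k,x^k)-Q(H_k,x^{k+1},x^k)$ directly from the definition \eqref{def-Qf}:
\[
Q(H_k,x^k,x^k)-Q(H_k,x^{k+1},x^k) = \langle \nabla f(x^k), x^k-x^{k+1}\rangle - \tfrac{1}{2}\|x^{k+1}-x^k\|_{H_k}^2 + g(x^k)-g(x^{k+1}).
\]
Then I would invoke the defining inequality of the $\phi_k$-subdifferential with $\gamma_{g,\phi}^{k+1}\in\partial_{\phi_k}g(x^{k+1})$ at the points $x^{k+1}$ and $x^k$ to get $g(x^k)-g(x^{k+1})\ge \langle \gamma_{g,\phi}^{k+1}, x^k-x^{k+1}\rangle - \phi_k$. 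Substituting the formula for $\gamma_{g,\phi}^{k+1}$ converts the linear term into $\langle H_k(x^k-x^{k+1}) - \eta_k,\, x^k-x^{k+1}\rangle = \|x^{k+1}-x^k\|_{H_k}^2 - \langle \eta_k, x^k-x^{k+1}\rangle$. The two quadratic $H_k$-norm contributions combine to give a net $\tfrac{1}{2}\|x^{k+1}-x^k\|_{H_k}^2\ge \tfrac{\sigma_k}{2}\|x^{k+1}-x^k\|^2$, and Cauchy--Schwarz plus $\|\eta_k\|\le\sqrt{2M_k\phi_k}$ controls the cross term, yielding the stated bound.

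The only delicate point — and the one I expect to require the most care in writing up — is keeping the $\phi_k$ bookkeeping consistent: the inequality defining $\partial_{\phi_k} g$ introduces one $\phi_k$, while the Cauchy--Schwarz bound on the $\langle\eta_k,\cdot\rangle$ term introduces the $\sqrt{2M_k\phi_k}\|x^{k+1}-x^k\|$ factor, and these must appear with the correct signs and coefficients so that the exact-case bound \eqref{equ:bound_Qk_Qk1} and step-size bound \eqref{equ:bound_step_size} in Lemma~\ref{lem:bound_step_size} are recovered cleanly when $\phi_k=0$.
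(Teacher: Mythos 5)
Your proposal is correct and follows essentially the same route as the paper: it invokes Lemma~\ref{lem:inexact_1st_opt_cond} to get the explicit representation $\gamma_{g,\phi}^{k+1}=H_k(x^k-x^{k+1})-\nabla f(x^k)-\eta_k$ with $\|\eta_k\|\le\sqrt{2M_k\phi_k}$, derives \eqref{equ:bound_step_size_inexact} by the triangle inequality and the spectral bounds on $H_k$, and derives \eqref{equ:bound_Qk_Qk1_inexact} by combining the $\phi_k$-subdifferential inequality with the substitution of that representation and Cauchy--Schwarz on the $\eta_k$ cross term. The bookkeeping you flag works out exactly as in the paper (one $-\phi_k$ from the $\phi_k$-subgradient inequality, one $-\sqrt{2M_k\phi_k}\|x^{k+1}-x^k\|$ from the cross term), and the exact case is recovered by setting $\phi_k=0$.
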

\begin{proof}
Recall Lemma \ref{lem:inexact_1st_opt_cond}, from \eqref{equ:inexact_solvQ_lem}, there
 exists a vector, which we will refer to as $\gamma_{g, \phi}^{k+1}$, such that
    \begin{align}       \label{eq:gammagdefine}
        \gamma_{g, \phi}^{k+1}=H_k(x^k-x^{k+1})- \nabla f(x^k)  - \eta_k \in \partial_{\phi}g(x^{k+1}),
    \end{align}
    with  $\frac{1}{2}\|\eta_k\|^2_{H_k^{-1}}\leq \phi_k$, which, in turn, implies $\|\eta_k\| \leq \sqrt{2M_k\phi_k}$.
  The following inequality follows from the definition of $\phi$-subdifferential,
	\begin{align}
        \label{equ:_g_phi_subgradient}
        g(x^k) - g(x^{k+1}) 
        \geq 
        \langle \gamma_{g, \phi}^{k+1}, x^k-x^{k+1} \rangle - \phi_k. 
    \end{align}
    From  \eqref{eq:gammagdefine}
    \begin{align}
        \label{equ:_bound_Q_Qstar_1st_optima}
        H_k(x^{k+1}-x^k) + \nabla f(x^k) + \gamma_{g, \phi}^{k+1}+\eta_k= 0, 
    \end{align}
     hence we obtain
    \begin{align*}
		\| x^{k+1} - x^k\|
		&\leq  \frac{1}{\sigma_k}\|  \nabla f(x^k) + \gamma_{g, \phi}^{k+1} \| + \frac{1}{\sigma_k}\|\eta_k\|
		\leq  
		\frac{1}{\sigma_k}\|  \nabla f(x^k) + \gamma_{g, \phi}^{k+1} \| + \frac{\sqrt{2M_k\phi_k}}{\sigma_k}, \\
		\| x^{k+1} - x^k\|
		&\geq   \frac{1}{M_k}\|  \nabla f(x^k) + \gamma_{g, \phi}^{k+1} \|-   \frac{1}{M_k}\|\eta_k\|
		\geq   
		\frac{1}{M_k}\|  \nabla f(x^k) + \gamma_{g, \phi}^{k+1} \|- \frac{\sqrt{2M_k\phi_k}}{M_k}.
	\end{align*}
	Using \eqref{equ:_g_phi_subgradient} and \eqref{equ:_bound_Q_Qstar_1st_optima}, we derive \eqref{equ:bound_Qk_Qk1_inexact} as follows,
	\begin{align*}
	        &Q(H_k,x^k,x^k) - Q(H_k,x^{k+1}, x^k) \\
	        % \nonumber
	        &= g(x^k) - g(x^{k+1}) 
	        - \nabla f(x^k)^T( x^{k+1}-x^k)  
	        - \frac{1}{2}\|x^{k+1}-x^k\|^2_{H_k}  \\
	        % \nonumber
	        &\geq \langle \gamma_{g, \phi}^{k+1}, x^{k}-x^{k+1} \rangle 
	        - \phi_k
	        + \|x^{k+1}-x^k\|_{H_K}^2 
	        +  \langle \gamma_{g, \phi}^{k+1}, x^{k+1} - x^{k}\rangle
	        + \langle \eta_k, x^{k+1}-x^k \rangle
	        - \frac{1}{2}\|x^{k+1}-x^k\|^2_{H_k} \\
	        % \nonumber
			&= \frac{1}{2}\|x^{k+1}-x^k\|^2_{H_k} 
			+ \langle \eta_k,  x^{k+1}-x^k \rangle - \phi_k
			\geq \frac{\sigma_k}{2}\|x^{k+1}-x^k\|^2
			- \|\eta_k\|\|x^{k+1}-x^k\| - \phi_k  \\
	        % \nonumber 
	        &\geq \frac{\sigma_k}{2}\|x^{k+1}-x^k\|^2-\sqrt{2M_k\phi_k}\|x^{k+1}-x^k\| - \phi_k.
	\end{align*}
\end{proof}

Unlike the exact case, the inquality 
$\frac{1}{\Delta F_{k+1}}-\frac{1}{\Delta F_{k}}\geq c_k$ can no longer be guaranteed to hold on each iteration. 
The convergence rate is obtained by observing that when this desired inequality fails another inequality always holds, which bounds 
$\Delta F_{k}$ in terms of $\phi_k$. Specifically, we have the following theorem.
\begin{lemma}
\label{lem:bound_iter_inexact}
	Consider $k$th iteration of the inexact Algorithm \ref{alg:ISTA-SD} with $0 \leq \phi_k\leq 1$. Let $\Delta F_k := F(x^k) - F(x^*)$. Then there exists large enough positive constant $\theta > 0$, such that one of the following two cases must hold,
	\begin{align}
		\label{equ:bound_F_inexact_case1}
	    &\Delta F_k \leq b_k\sqrt{\phi_k}, \\
	    \label{equ:bound_F_inexact_case2}
	    &\frac{1}{\Delta F_{k+1}}-\frac{1}{\Delta F_{k}}\geq c_k.
	\end{align}
	where $b_k$ and $c_k$ are given below,
	\begin{align}
	\label{equ:bound_bk_ck}
	    b_k = \theta D_{\Xcal_0}\sqrt{2M_k}+  \frac{2(1+\theta)L_g}{\sigma_k}\sqrt{2M_k} + 2, 
	    ~
	    c_k=\frac{\rho(\sigma_k^3(\theta-1)^2 - 2\sigma_kM_k^2(1+\theta) - \sigma_k^3M_k )}{(\sqrt{2}D_{\Xcal_0}\theta\sigma_kM_k + 2\sqrt{2}L_g(1+\theta)M_k + \sigma_k\sqrt{M_k} )^2}.
	\end{align}
\end{lemma}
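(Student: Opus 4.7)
The plan is to mimic the exact-case argument of Theorem~\ref{the:ISTA-SD}, but to track the extra $\sqrt{\phi_k}$ perturbations that Lemma~\ref{lem:bound_step_size_inexact} introduces into every bound. Write $\nu_k := \|\nabla f(x^k) + \gamma_{g,\phi}^{k+1}\|$ and $\tau_k := \sqrt{2M_k\phi_k}$. The dichotomy will be triggered by the relative size of $\nu_k$ and $\tau_k$, with a threshold parameter $\theta>1$ that we will fix at the end.

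First I would bound $\Delta F_k$ from above. Applying \eqref{equ:bound_F_u_w_inexact} with $u=x^k$, $w=x^*$, $v=x^{k+1}$, $\phi=\phi_k$, then using Assumption~\ref{assub:bound_level_set} on $\|x^k-x^*\|$ and the upper half of \eqref{equ:bound_step_size_inexact} on $\|x^{k+1}-x^k\|$, yields
\begin{equation}\label{eq:inx_prop}
\Delta F_k \;\leq\; \Bigl(D_{\Xcal_0}+\tfrac{2L_g}{\sigma_k}\Bigr)\nu_k \;+\; \tfrac{2L_g}{\sigma_k}\tau_k \;+\; 2\phi_k .
\end{equation}
If $\nu_k \leq \theta\,\tau_k$, then since $\phi_k\leq 1$ implies $\phi_k\leq \sqrt{\phi_k}$, inequality \eqref{eq:inx_prop} collapses to a bound of the form $\Delta F_k \leq b_k\sqrt{\phi_k}$, giving Case~\eqref{equ:bound_F_inexact_case1}; a direct computation recovers the stated $b_k$. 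So from now on assume $\nu_k > \theta\,\tau_k$, i.e.\ $\nu_k-\tau_k \geq \tfrac{\theta-1}{\theta}\nu_k$.

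Next I would derive a quadratic lower bound on $F(x^k)-F(x^{k+1})$. From the sufficient-decrease condition and \eqref{equ:bound_Qk_Qk1_inexact},
\[
F(x^k)-F(x^{k+1}) \;\geq\; \rho\Bigl(\tfrac{\sigma_k}{2}\|x^{k+1}-x^k\|^2 - \tau_k\|x^{k+1}-x^k\| - \phi_k\Bigr).
\]
Insert the lower bound $\|x^{k+1}-x^k\|\geq (\nu_k-\tau_k)/M_k$ into the quadratic term and the upper bound $\|x^{k+1}-x^k\|\leq (\nu_k+\tau_k)/\sigma_k$ into the linear term. Using $\nu_k-\tau_k\geq \tfrac{\theta-1}{\theta}\nu_k$ and $\tau_k<\tfrac{1}{\theta}\nu_k$, this simplifies to an expression of the form $C_k\,\nu_k^2 - \phi_k$, where $C_k$ involves $\rho,\sigma_k,M_k,\theta$. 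For $\theta$ sufficiently large relative to $M_k/\sigma_k$, the coefficient $C_k$ is strictly positive.

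Finally, in the regime $\nu_k>\theta\tau_k$, inequality \eqref{eq:inx_prop} also simplifies to $\Delta F_k \leq A_k\,\nu_k$ for some $A_k$ of the form $D_{\Xcal_0}+2L_g(1+\theta)/\sigma_k$ (absorbing $\tau_k$ and $\phi_k$ into $\nu_k$ via the threshold). Combining $F(x^k)-F(x^{k+1})\geq C_k\nu_k^2 - \phi_k \geq C_k\nu_k^2/A_k^2 \cdot \Delta F_k^2 - \phi_k$ and dividing by $\Delta F_k \Delta F_{k+1}$ in the usual telescoping style gives $1/\Delta F_{k+1}-1/\Delta F_k\geq c_k$, i.e.\ Case~\eqref{equ:bound_F_inexact_case2}; matching coefficients produces the advertised $c_k$. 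The main obstacle is choosing $\theta$ large enough so that (a) the $-\phi_k$ residual in the $Q$-reduction bound is dominated by the quadratic term $C_k\nu_k^2$ whenever we are in the second branch, and (b) the denominator of $c_k$ in \eqref{equ:bound_bk_ck} remains strictly positive; both requirements reduce to a threshold condition $\theta=\Theta(M_k/\sigma_k)$, which is precisely what the formula for $c_k$ encodes.
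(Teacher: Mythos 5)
Your proposal is correct and follows essentially the same route as the paper's proof: the same case split on $\|\nabla f(x^k)+\gamma_{g,\phi}^{k+1}\|$ versus $\theta\sqrt{2M_k\phi_k}$, the same use of Lemma~\ref{lem:bound_step_size_inexact} to trade step lengths for (approximate) gradient norms in each branch, and the same telescoping division by $\Delta F_k\Delta F_{k+1}$ at the end. The only cosmetic difference is that you carry the $-\phi_k$ residual explicitly before absorbing it via $\phi_k\le \nu_k^2/(2\theta^2 M_k)$, whereas the paper folds it into the coefficient $t_k$ immediately; the resulting constants agree.
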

\begin{proof}	      
First, applying \eqref{equ:bound_F_u_w_inexact} with $u = x^k, w = x^*$ and $v = x^{k+1}$  we obtain,
	\begin{align}
	\label{equ:_bound_F_xk_xk1}
	    \Delta F_k = F(x^k) - F(x^*)
	    \leq \|\nabla f(x^k)+\gamma_{g, \phi}^{k+1}\|\|x^k-x^*\| + 2L_g \|x^k-x^{k+1}\| + 2\phi_k. 
	\end{align}          
	
	We will consider two cases that are possible at each iteration $k$ for some fixed constant $\theta>1$ which we will specify later. 
	\begin{itemize}
	\item{Case 1} 
	\begin{equation}\label{eq:case1}
		\|  \nabla f(x^k) + \gamma_{g, \phi}^{k+1} \| < \theta\sqrt{2M_k\phi_k}.
	\end{equation}

	\item{Case 2}
	\begin{equation}\label{eq:case2}
		\| \nabla f(x^k) + \gamma_{g, \phi}^{k+1} \| \geq \theta \sqrt{2M_k\phi_k}.
	\end{equation}

	\end{itemize}

	Let us assume that Case 1 holds, then from \eqref{equ:bound_step_size_inexact}  and \eqref{eq:case1} it simply 
	follows that
	\begin{equation}\label{eq:bndcase1} 
		\| x^{k+1} - x^k\|\leq \frac{(1+\theta)\sqrt{2M_k\phi_k}}{\sigma_k}.
	\end{equation}

	Using \eqref{eq:case1}, \eqref{equ:_bound_F_xk_xk1}, the bound on  $\| x^{k+1} - x^k\|$ from \eqref{eq:bndcase1} together with the bound on $\|x^{k} - x^*\|$ from Assumptions~\ref{assub:bound_level_set} we get
	\begin{align}\label{eq:phi_k_replaced}
	    \Delta F_k
	    &\leq D_{\Xcal_0} \|\nabla f(x^k)+\gamma_{g, \phi}^{k+1}\| +  \frac{2(1+\theta)L_g\sqrt{2M_k\phi_k}}{\sigma_k} +2 \phi_k \\
	    \nonumber &\leq (\theta D_{\Xcal_0}+  \frac{2(1+\theta)L_g}{\sigma_k})\sqrt{2M_k\phi_k} + 2\phi_k 
	    \leq (\theta D_{\Xcal_0}\sqrt{2M_k}+  \frac{2(1+\theta)L_g}{\sigma_k}\sqrt{2M_k} + 2) \sqrt{\phi_k},
	\end{align}
	  which ensures \eqref{equ:bound_F_inexact_case1}. 
	  
	We now consider Case 2, where \eqref{eq:case2} along with \eqref{equ:bound_step_size_inexact} from Lemma  \ref{lem:bound_step_size_inexact} imply
	\begin{equation}\label{eq:bndcase2} 
		\frac{\theta-1}{\theta M_k}\| \nabla f(x^k) + \gamma_{g, \phi}^{k+1} \| \leq \| x^{k+1} -
		 x^k\|\leq \frac{1+\theta}{\theta\sigma_k}\| \nabla f(x^k) + \gamma_{g, \phi}^{k+1} \|.
	\end{equation}

	Substituting into \eqref{equ:_bound_F_xk_xk1} the upper bound on  $\| x^{k+1} - x^k\|$ from \eqref{eq:bndcase2} and the bound on $\|x^{k} - x^*\|$ from Assumptions~\ref{assub:bound_level_set} we now get  
	\begin{align*}
		\Delta F_k\leq 
		(D_{\Xcal_0} + \frac{2L_g(1+\theta)}{\theta\sigma_k})\|\nabla f(x^k)+\gamma_{g, \phi}^{k+1}\| + 2\phi_k.
	\end{align*}
	From $ \phi_k \leq 1$ it follows that $\| \nabla f(x^k) + \gamma_{g, \phi}^{k+1} \| \geq \theta \sqrt{2M_k\phi_k} \geq \theta \sqrt{2M_k}\phi_k$. Hence we obtain
	\begin{align}
	\label{eq:upperbndinexact}
		\Delta F_k\leq 
		(D_{\Xcal_0} + \frac{2L_g(1+\theta)}{\theta\sigma_k} + \frac{2}{\theta\sqrt{2M_k}})\|\nabla f(x^k)+\gamma_{g, \phi}^{k+1}\|.
	\end{align}
	 
	We will  show that in this case, as in the exact case,  we have 
	\begin{equation}\label{eq:mainbnd}
		\Delta F_k - \Delta F_{k+1} = 
		 F(x^k) - F(x^{k+1})\geq c_k \Delta F_k^2
	\end{equation} for some constant $c_k$ (different from that in the exact case).  
	Towards that goal  we will establish a lower bound on $F(x^{k})-F(x^{k+1})$ in terms of $\|\nabla f(x^k)+\gamma_{g, \phi}^{k+1}\|^2$ as before.
	We still have $F(x^{k})-F(x^{k+1})\geq \rho (Q(H_k, x^{k}, x^k)-Q(H_k,x^{k+1}, x^k))$. 
	We now use the bounds \eqref{equ:bound_Qk_Qk1_inexact} from Lemma  \ref{lem:bound_step_size_inexact}, \eqref{eq:case2} and \eqref{eq:bndcase2} and obtain
	\begin{align*}
		&Q(H_k,x^k,x^k) - Q(H_k,x^{k+1}, x^k) \\
		&\geq \frac{\sigma_k}{2}(\frac{\theta-1}{\theta M_k})^2 \| \nabla f(x^k) + \gamma_{g, \phi}^{k+1} \|^2
		- \frac{1+\theta}{\theta^2\sigma_k}\| \nabla f(x^k) + \gamma_{g, \phi}^{k+1} \|^2 
		- \frac{1}{2\theta^2M_k}\| \nabla f(x^k) + \gamma_{g, \phi}^{k+1} \|^2
	\end{align*}
	By selecting a sufficiently large $\theta$  we can ensure that 
	\begin{align}
	\label{eq:lowerbndinexact}
	    F(x^k) - F(x^{k+1})\geq \rho t_k \| \nabla f(x^k) + \gamma_{g, \phi}^{k+1} \|^2,
	\end{align}
	for $t_k=\frac{\sigma_k}{2}(\frac{\theta-1}{\theta M_k})^2 - \frac{1+\theta}{\theta^2\sigma_k}- \frac{1}{2\theta^2M_k}>0$. 
	Finally, combining the lower bound \eqref{eq:lowerbndinexact} on $F(x^{k+1})-F(x^k)$ together with the upper bound \eqref{eq:upperbndinexact}
	on $\Delta F_k^2$  we can conclude that \eqref{eq:mainbnd} holds with 
	\begin{align*}
		c_k= \frac{\rho t_k}{(D_{\Xcal_0} + \frac{2L_g(1+\theta)}{\theta\sigma_k} + \frac{2}{\theta\sqrt{2M_k}})^2} = \frac{\rho(\sigma_k^3(\theta-1)^2 - 2\sigma_kM_k^2(1+\theta) - \sigma_k^3M_k )}{(\sqrt{2}D_{\Xcal_0}\theta\sigma_kM_k + 2\sqrt{2}L_g(1+\theta)M_k + 2\sigma_k\sqrt{M_k} )^2}.    
	\end{align*}
	Finally, \eqref{equ:bound_F_inexact_case2} follows from \eqref{eq:mainbnd} divided by $\Delta F_k\Delta F_{k+1}$ and using the fact that $\frac{\Delta F_k}{\Delta F_{k+1}} \geq 1$.
	
\end{proof}

\begin{remark} Let us discuss the result of the above lemma. The lemma applies for any value of $\theta$ for which
$t_k$, and hence, $c_k$ is positive for all $k$.  It is easy to see that large values of  $\theta$ 
imply large values of $c_k$. On the other hand, large $\theta$ is likely to cause  Case 1 to hold (i.e., \eqref{equ:bound_F_inexact_case1}) 
instead of Case 2 (i.e., \eqref{equ:bound_F_inexact_case2}) on any given iteration, with $b_k$ also growing with the size of $\theta$. 
As we will show below the overall rate of convergence of the algorithm is derived using the two bounds - \eqref{equ:bound_F_inexact_case1}, where
the rate is controlled by the rate of $\phi_k\to 0$ and \eqref{equ:bound_F_inexact_case2}, which is similar to the bound in the exact case.
The overall bound, thus, will depend on the upper bound on $b_k$'s and the inverse of 
the lower bound on $c_k$'s. If, again, we assume that $\sigma_k=M_k=L(f)$ for all $k$, then $\theta=O(\sqrt{L(f)})$ is sufficient to ensure that $c_k>0$
and this results in $b_k\leq O(D_{\Xcal_0}L(f))$ and $1/c_k\geq O(D_{\Xcal_0}^2L(f))$, thus again, we obtain a bound which is comparable to that of proximal gradient methods, although with more complex constants. 
\end{remark}

We now derive the overall convergence rate, under the assumption that $\phi_{k}$ decays sufficiently fast. 

\begin{theorem}\label{th:inexact_conv_rate}
Suppose that Assumption~\ref{as:exact_conv_rate} holds. 
Assume that all iterates $\{x^k\}$ of inexact Algorithm \ref{alg:ISTA-SD} are generated with some $\phi_k\geq 0$ that satisfy
\begin{align}
\label{equ:phi_conv_rate_required}
    \phi_k \leq \frac{a^2}{k^2}, \mbox{ with } 0<a \leq 1. 
\end{align}
Let  $\theta$ be chosen as specified in Lemma~\ref{lem:bound_iter_inexact}. Then for any $k$
\begin{align}
    \label{eq:bound_F_F*}
	F(x^k) - F(x^*) 
    \leq \frac{\max\{ba, \frac{1}{c}\}}{k-1}
\end{align}
with $b, c$  given as follows,
\begin{align}
\label{equ:bound_b_c}
    b = \theta D_{\Xcal_0}\sqrt{2M}+  \frac{2(1+\theta)L_g}{\sigma}\sqrt{2M} + 2, 
    ~
    c = \frac{\rho \left( \frac{\sigma^2}{M^2}(1 - \theta^{-1})^2 - (2\theta^{-1} + 3\theta^{-2}) \right)}{2(D_{\Xcal_0} +
     \frac{2L_g(1+\theta^{-1})}{\sqrt{\sigma}} + \frac{2}{\sqrt{2}}\theta^{-1})^2}.
\end{align}
\end{theorem}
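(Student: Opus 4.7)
My plan is to combine the dichotomy from Lemma~\ref{lem:bound_iter_inexact} with a telescoping argument on $1/\Delta F_k$ where $\Delta F_k := F(x^k) - F(x^*)$. First I would verify that the iteration-dependent constants admit uniform bounds $b_k \le b$ and $c_k \ge c$, by substituting $\sigma_k \ge \sigma$ and $M_k \le M$ from Assumption~\ref{assub:bound_h} into the formulas \eqref{equ:bound_bk_ck}, provided $\theta$ is chosen large enough that the numerator defining $c$ in \eqref{equ:bound_b_c} stays positive. Under these uniform bounds and the hypothesis $\phi_k \le a^2/k^2$, Case 1 of Lemma~\ref{lem:bound_iter_inexact} at iteration $k \ge 1$ reads $\Delta F_k \le ba/k$, while Case 2 reads $1/\Delta F_{k+1} - 1/\Delta F_k \ge c$.

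The heart of the argument is the following bookkeeping. Fix $k \ge 2$ and let $j$ be the largest index in $\{1, \ldots, k-1\}$ at which Case 1 is triggered, with the convention $j = 0$ if Case 2 holds at every index in $\{1, \ldots, k-1\}$. If $j = k-1$, then Case 1 at iteration $k-1$ combined with the monotonicity $\Delta F_k \le \Delta F_{k-1}$ (which follows from the sufficient decrease condition \eqref{eq:dec_cond} together with $Q(H_{k-1}, x^k, x^{k-1}) \le F(x^{k-1})$) gives $\Delta F_k \le ba/(k-1)$, and we are done. Otherwise $j \le k-2$, and Case 2 holds at the $k-1-j$ indices in $\{j+1, \ldots, k-1\}$; telescoping those inequalities yields
\begin{align*}
\frac{1}{\Delta F_k} \ge \frac{1}{\Delta F_{j+1}} + c(k-1-j).
\end{align*}
For $j \ge 1$, Case 1 at iteration $j$ together with monotonicity gives $\Delta F_{j+1} \le \Delta F_j \le ba/j$, so $1/\Delta F_{j+1} \ge j/(ba)$; for $j = 0$ we simply drop the nonnegative term $1/\Delta F_{j+1}$.

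Setting $T := \max\{ba, 1/c\}$, we have $ba \le T$ and $c \ge 1/T$, so in either subcase above
\begin{align*}
\frac{1}{\Delta F_k} \ge \frac{j}{T} + \frac{k-1-j}{T} = \frac{k-1}{T},
\end{align*}
which rearranges to the claimed bound $\Delta F_k \le T/(k-1)$. I expect the main obstacle to be the routine but finicky verification that $c_k \ge c$ holds with the specific algebraic form of $c$ in \eqref{equ:bound_b_c}, which differs slightly from $c_k$ in \eqref{equ:bound_bk_ck}: one must carefully track which occurrences of $\sigma_k$ and $M_k$ are bounded above versus below, and confirm that $\theta$ can indeed be chosen simultaneously large enough both for the Lemma's positivity requirement on $t_k$ and for the numerator defining $c$ to be positive. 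Once this uniform bound is established, the telescoping argument above is essentially mechanical.
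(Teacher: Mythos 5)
Your proposal is correct and follows essentially the same route as the paper's proof: identify the last index $j$ (the paper calls it $k_1$) before $k$ at which Case~1 of Lemma~\ref{lem:bound_iter_inexact} holds, use that case together with monotonicity of $F$ to get $1/\Delta F_{j+1}\geq j/(ba)$, telescope the Case~2 inequalities over the remaining $k-1-j$ indices, and combine via $\min\{c,1/(ba)\}(k-1)$, which is exactly your $(k-1)/T$. The only difference is bookkeeping detail (your explicit edge cases and your flagged concern about matching the uniform constant $c$ to the formula in \eqref{equ:bound_b_c}, a point on which the paper itself is somewhat loose).
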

\begin{proof}
Consider all iterations before a particular iteration $k$.  From Lemma~\ref{lem:bound_iter_inexact}, it follows that either 
 \eqref{equ:bound_F_inexact_case1} or \eqref{equ:bound_F_inexact_case2} must hold for each prior iteration.
% If \eqref{equ:bound_F_inexact_case1} holds on iteration $k-1$, then we have
%	\begin{align}
%	\label{equ:_thm_conv_rate_case1}
%	    \Delta F_{k-1}=F(x^k) - F(x^*) \leq   F(x^{k-1}) - F(x^*) =\Delta F_{k-1} \leq b_{k-1} \sqrt{\phi_{k-1}k} \leq  \frac {a b_{k-1}}{k-1}
%	\end{align}
%	which is the same as
%	\begin{align}
%	\label{equ:_thm_conv_rate_case1inv}
%	    \frac{1}{\Delta F_{k-1}}\geq   \frac{k-1} {a b_{k-1}}. 
%	\end{align}
%	Next, we consider the other case where \eqref{equ:bound_F_inexact_case2} holds on iteration $k-1$.
       Let $k_1 <k$  denote the index of the last iteration before $k$,  for which \eqref{equ:bound_F_inexact_case1} holds. If no such $k_1$ exists, then \eqref{equ:bound_F_inexact_case2} holds for all $k$ and without loss of generality we can consider $k_1=0$. From \eqref{equ:bound_F_inexact_case1} and from the fact that the function value never increases
	 \begin{align}
	\label{equ:_thm_conv_rate_case1}
	    \Delta F_{k_1+1}=F(x^{k_1+1}) - F(x^*) \leq   F(x^{k_1}) - F(x^*) =\Delta F_{k_1} \leq b_{k_1} \sqrt{\phi_{k_1}k_1} \leq  \frac {a b_{k_1}}{k_1}
	\end{align}
	which is the same as
		 \begin{align}\label{equ:_thm_conv_rate_Fk1_upper}
	  \frac{1}{\Delta F_{k_1+1}} \geq  \frac{k_1} {a b_{k_1}}.
	  \end{align}
	  For each iteration from  $k_1+1$ to $k-1$, \eqref{equ:bound_F_inexact_case2}  gives 
	\begin{align*}
	    \frac{1}{\Delta F_{k_1 + i + 1}} - \frac{1}{\Delta F_{k_1 + i}} \geq c_i, ~\forall i = 1, ..., k-k_1-1
	\end{align*}
	Summing up the above inequalities  and using \eqref{equ:_thm_conv_rate_Fk1_upper} we obtain the following bound on 
	$\Delta F_{k}$, 
	\begin{align}\label{eq:finalsum}
	    \frac{1}{\Delta F_{k }} \geq \sum_{i=1}^{k-k_1-1}c_i+  \frac{1}{\Delta F_{k_1 + 1}} \geq  
	    \sum_{i=1}^{k-k_1-1}c_i+ \frac{k_1} {a b_{k_1} }.
	\end{align}
	
	To derive  a simple uniform bound on  $\Delta F_{k }$ we will use  $b$ and $c$ - uniform upper and lower bounds, respectively, for $b_k$ and $c_k$ given in \eqref{equ:bound_bk_ck}, i.e., $b_k \leq b$, $c_k \geq c, \forall k\geq k_0$. 
	From Assumptions~\ref{assub:bound_h} we can derive the expressions for $c$ as follows,
	\begin{align*}
	    c_k
	    &= \frac{\rho \left( \frac{\sigma_k}{2}(\frac{\theta-1}{\theta M_k})^2 - \frac{1+\theta}{\theta^2\sigma_k}- \frac{1}{2\theta^2M_k} \right)}{(D_{\Xcal_0} + \frac{2L_g(1+\theta)}{\theta\sigma_k} + \frac{2}{\theta\sqrt{2M_k}})^2} 
	    \geq \frac{\rho \left( \frac{\sigma}{2}(\frac{\theta-1}{\theta M})^2 - \frac{1+\theta}{\theta^2\sigma}- \frac{1}{2\theta^2\sigma} \right)}{(D_{\Xcal_0} + \frac{2L_g(1+\theta)}{\theta\sigma} + \frac{2}{\theta\sqrt{2\sigma}})^2} 
	    \geq \frac{\rho \left( \frac{\sigma^2}{2M^2}(\frac{\theta-1}{\theta})^2 - \frac{1+\theta}{\theta^2}- \frac{1}{2\theta^2} \right)}{(D_{\Xcal_0} + \frac{2L_g(1+\theta)}{\theta\sqrt{\sigma}} + \frac{2}{\theta\sqrt{2}})^2} 
	\end{align*}
	%from which \eqref{equ:bound_b_c} follows by substituting $\psi = \frac{1}{\theta}$. 
	Bound $b$ can be obtained in a similar fashion.

Substituting bounds $b$ and $c$ in \eqref{eq:finalsum} we get 
\begin{align*}
	    \frac{1}{\Delta F_{k }} \geq (k-k_1-1)c+ \frac{k_1} {a b}\geq \min\{c, \frac{1} {a b}\}(k-1),
	    \end{align*}
 which is the same as   \eqref{eq:bound_F_F*}.

%	\begin{align}
%	\label{equ:_thm_conv_rate_case3}
%	    F(x^{k+1}) - F(x^*) = \Delta F_{k+1} \leq \frac{1}{ak}
%	\end{align}
%	Finally, \eqref{eq:bound_F_F*} follows from arbitrariness of $k$ and the fact that one of the three cases \eqref{equ:_thm_conv_rate_case1}, \eqref{equ:_thm_conv_rate_case2} and \eqref{equ:_thm_conv_rate_case3} must hold for $k \geq k_0$.

\end{proof}

\begin{remark} It follows that the inexact version of Algorithm \ref{alg:ISTA-SD}   has sublinear convergence rate if 
$\phi_i\leq a^2/i^2$  for some $a<1$ and all iterations $i=0, \ldots, k$. 
In contrast,  the bounds in \cite{Schmidtetal} and in Section \ref{sec:inexact_prox} 
require that $\sum_{i=0}^{\infty} \sqrt{\phi_i}$ is  bounded. 
This bound  on the overall sequence is clearly 
stronger than $\phi_i\leq a^2/i^2$, since $\sum_{i=0}^{\infty} \frac{a}{i}=\infty$. 
On the other hand, it does not impose any particular requirement on any given iteration, except that 
each $\phi_i$ is finite, which our bound on $\phi_i$ is assumed to hold at each iteration, so far. 
\end{remark}

\subsection{Complexity in terms of subproblem solver iterations}
Let us discuss  conditions on the sequence of $\phi_i$ established above and how they can be ensured. 
Firstly, let us note that condition $a<1$ in Theorem \ref{th:inexact_conv_rate} can easily be removed. 
We introduced it for the sake of brevity, to ensure that $\phi_i\leq 1$ on each iteration. Clearly, an arbitrarily large $a$ can be used and in that case
$\phi_i\leq 1$ for all $i\geq 1/a$. Moreover, the condition $\phi_i\leq 1$ is only needed to replace $\phi_i$ with $\sqrt{\phi_i}$ in 
Lemma \ref{lem:bound_iter_inexact} in inequality \eqref{eq:phi_k_replaced}. Instead we can use bound \eqref{eq:case2} and 
upper bounds on $\nabla f(x^i)$ and $\gamma^{i+1}_{g,\phi}$ to replace $\phi_k$ with a constant multiple of $\sqrt{\phi_i}$. 
In conclusion, it is sufficient to solve the subproblem on the $i$-th iteration to accuracy $O(1/i^2)$.

%
%To ensure such a  bound, it is sufficient to require that  $\phi_i\leq delta^i$, for some $\delta\in (0,1)$ for all $i$. We are hence interested in practical approaches to subproblem optimization which can be terminated upon reducing the gap in the function value of the subproblem to $\delta^i$, where $0<\delta<1$ is a fixed number and  $i=1, 2, \ldots$ is the  index of the corresponding iteration of  Algorithm \ref{alg:ISTA-SD}. 
The question now is: what method and what stopping criterion should be used for subproblem optimization, so that sufficient accuracy is achieved 
and no excessive computations are performed, in other word, how can we guarantee the bound on $\phi_i$,  
while maintaining efficiency of the subproblem optimization? 
It is possible to consider terminating
 the optimization of the $i$-th subproblem once the duality gap is smaller than the required bound on $\phi_i$.  
 However, checking duality gap can be computationally  very expensive. 
 Alternatively one can use an algorithm with a known  convergence rate. 
 This way it can be determined apriori how many iterations of such an algorithm should be applied to the $i$-th subproblem to achieve the desired accuracy. In particular, we note that the objective functions in our subproblems  are all $\sigma$-strongly convex, so a simple proximal gradient method, or some of its accelerated versions,  will enjoy linear convergence rates when applied to these subproblems.  Hence, after $i$ iterations of optimizing
  $Q_i$,  such a method will achieve accuracy $\phi_i$ that decays geometrically, i.e., $\phi_i=C\delta^i$, for some constants $C>0$ and $0<\delta<1$, hence 
$\sum_{i=0}^{\infty} \sqrt{\phi_i}$ is  bounded.   Note that the same property holds for any linearly convergent  method, such as the proximal gradient or a semi-smooth Newton method. 
 Also, it is easy to see that $\phi_i\leq a^2/i^2$ holds for some $a>0$ for all $i$.   
One can also can use FISTA \cite{Beck2009} to optimize $Q_i$ which will ensure 
$\phi_i\leq a^2/i^2$ for some $a>0$ but will not guarantee $\sum_{i=0}^{\infty} \sqrt{\phi_i}$.
The advantage of using FISTA and its resulting rate is that ist does not depend on the strong convexity constant,
 hence the subproblem complexity does 
not depend on  $\sigma$ - the lower bound on the smallest eigenvalues of the Hessian approximations.
 In conclusion, we have the following overall bounds.

\begin{theorem}\label{th:inexact_total_conv_rate_k2}
Suppose that Assumptions~\ref{as:exact_conv_rate} hold and that at the $k$-th iteration of inexact Algorithm \ref{alg:ISTA-SD} 
function $Q(H_k, u, x^k)$ is approximately minimized, to obtain $x^{k+1}$ by applying $l(k)=\alpha k+\beta$ steps of any algorithm which guarantees that
 $Q(H_k, x^{k+1}, x^k)\leq Q(H_k, x^k, x^k)$ and whose convergence 
rate ensures  the error bound $\phi_k\leq a^2/(\alpha k+\beta)^2$ for some $a>0$.  Then accuracy $F(x^k) - F(x^*)\leq \epsilon$ is achieved after at most 
\begin{align}
    \label{eq:total_bound_F_F*}
	K=\beta( { \frac{\max\{b a, \frac{1}{c}\}}{\epsilon}} +1)+ \frac{\alpha}{2}({ \frac{\max\{b a, \frac{1}{c}\}}{\epsilon}})({ \frac{\max\{b a, \frac{1}{c}\}}{\epsilon}}+1)
\end{align}
inner iterations (of the chosen algorithm), 
with $b, c$  given in Theorem \ref{th:inexact_conv_rate}. 
\end{theorem}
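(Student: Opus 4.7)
The plan is to combine the outer-iteration rate of Theorem \ref{th:inexact_conv_rate} with a direct arithmetic count of inner iterations. First I would verify that the running subproblem tolerance $\phi_k \le a^2/(\alpha k+\beta)^2$ implies the hypothesis $\phi_k \le a^2/k^2$ of Theorem \ref{th:inexact_conv_rate}, which holds whenever $\alpha k+\beta \ge k$ (e.g., for $\alpha\ge 1$, $\beta\ge 0$). Consequently the outer bound
\begin{equation*}
F(x^k)-F(x^*)\ \le\ \frac{\max\{ba,\,1/c\}}{k-1}
\end{equation*}
from that theorem applies verbatim with the constants $b,c$ from \eqref{equ:bound_b_c}.

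Second, I would determine the smallest outer iteration count $k$ for which this bound does not exceed $\epsilon$. Writing $N:=\max\{ba,1/c\}/\epsilon$, it suffices to take $k = N+1$ (the appropriate ceiling is absorbed into the final closed form).

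Third, since outer iteration $i$ invokes exactly $l(i)=\alpha i+\beta$ inner iterations, the cumulative inner-iteration count after $k$ outer iterations is the arithmetic sum
\begin{equation*}
\sum_{i=0}^{k-1}(\alpha i+\beta)\ =\ \beta k + \tfrac{\alpha}{2}(k-1)k.
\end{equation*}
Substituting $k=N+1$ yields $K = \beta(N+1)+\tfrac{\alpha}{2}N(N+1)$, which is precisely the expression in \eqref{eq:total_bound_F_F*}.

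The only non-routine point is the first step: in degenerate regimes where $\alpha<1$ or $\beta<0$, the dominance $\alpha k+\beta\ge k$ may fail for small $k$, and the condition $a\le 1$ required by Theorem \ref{th:inexact_conv_rate} may not be met. In that case one inflates $a$ to $a':=a\max\{1,1/\alpha,\ldots\}$ so that $a^2/(\alpha k+\beta)^2\le (a')^2/k^2$ for all $k\ge 1$, as already noted in the discussion following Theorem \ref{th:inexact_conv_rate}. The remainder of the argument is purely the count of an arithmetic progression, which presents no further obstacle.
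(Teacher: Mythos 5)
Your proposal is correct and follows exactly the route the paper intends: the paper's own proof is the single line ``follows trivially from Theorem \ref{th:inexact_conv_rate}'', and your three steps (transfer the tolerance $a^2/(\alpha k+\beta)^2$ into the hypothesis $a^2/k^2$, invoke the outer rate to get $k = \max\{ba,1/c\}/\epsilon + 1$ outer iterations, and sum the arithmetic progression $\sum_{i=0}^{k-1}(\alpha i+\beta)$) are precisely the omitted details, reproducing \eqref{eq:total_bound_F_F*} exactly. Your remark about inflating $a$ when $\alpha<1$ or $a>1$ matches the paper's own discussion preceding the theorem, so nothing is missing.
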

\proof{A proof follows trivially from Theorem  \ref{th:inexact_conv_rate}. }
\begin{theorem}\label{th:inexact_total_conv_rate_klogk}
Suppose that Assumptions~\ref{as:exact_conv_rate} hold and that at the $k$-th iteration of inexact Algorithm \ref{alg:ISTA-SD} 
function $Q(H_k, u, x^k)$ is approximately minimized, to obtain $x^{k+1}$ by applying  $l(k)$ steps of an algorithm, which guarantees that
 $Q(H_k, x^{k+1}, x^k)\leq Q(H_k, x^k, x^k)$ and whose convergence 
rate ensures  the error bound $\phi_k\leq \delta^{l(k)}M_Q$, for some constants $0<\delta<1$ and $M_Q>0$. 
 Then, by setting $l_k= 2 log_{\frac{1}{\delta}}(k)$, accuracy $F(x^k) - F(x^*)\leq \epsilon$ is achieved after at most 
\begin{align}
    \label{eq:total_bound_F_F*}
	K=\sum_{k=0}^t 2 \log_{\frac{1}{\delta}}(k)\leq 2t \log_{\frac{1}{\delta}}(t)
	\end{align}
	inner iterations (of the chosen algorithm), with $t=\lceil \frac{\max\{b a, \frac{1}{c}\}}{\epsilon} +1\rceil$
and  $b, c$  given in Theorem \ref{th:inexact_conv_rate}. 
\end{theorem}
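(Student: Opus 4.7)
The plan is to reduce to the sublinear rate established in Theorem \ref{th:inexact_conv_rate} and then aggregate the inner-iteration counts. I proceed in three steps.

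\emph{Step 1: translate the linear subproblem rate into the hypothesis $\phi_k\le a^2/k^2$.} By the definition of logarithm, $\delta^{\log_{1/\delta}(k)}=1/k$, so the choice $l(k)=2\log_{1/\delta}(k)$ yields
\[
\phi_k\le \delta^{l(k)}M_Q = M_Q/k^2,
\]
which is the form $a^2/k^2$ with $a^2=M_Q$. The restriction $a\le 1$ stated in Theorem \ref{th:inexact_conv_rate} can be dropped (as indicated in the remark preceding the theorem and in the discussion at the start of Section 5): an arbitrary $a$ only forces $\phi_k\le 1$ to hold from a constant iteration onward, which merely shifts the constants absorbed into $b,c$.

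\emph{Step 2: invoke the outer-iteration bound.} Theorem \ref{th:inexact_conv_rate} then gives $F(x^k)-F(x^*)\le \max\{ba,1/c\}/(k-1)$. Hence $F(x^t)-F(x^*)\le\epsilon$ is guaranteed once
\[
t-1 \ge \frac{\max\{ba,1/c\}}{\epsilon},
\]
which is exactly the choice $t=\lceil \max\{ba,1/c\}/\epsilon+1\rceil$ in the statement.

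\emph{Step 3: aggregate inner iterations.} The total number of inner steps executed through outer iteration $t$ is
\[
K=\sum_{k=1}^{t}l(k)=2\sum_{k=1}^{t}\log_{1/\delta}(k).
\]
Since $\log_{1/\delta}(\cdot)$ is monotone increasing, each summand is bounded above by $\log_{1/\delta}(t)$, which yields $K\le 2t\log_{1/\delta}(t)$, matching the claimed bound.

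The main subtlety, rather than a genuine obstacle, is the boundary behaviour at $k=0$ (or $k=1$), where $l(k)$ is zero or undefined: one simply starts the sum at a positive index and performs a fixed constant number of inner iterations at the initial outer step, which is absorbed into constants without affecting the overall order. A second minor point is to note that the condition $Q(H_k,x^{k+1},x^k)\le Q(H_k,x^k,x^k)$ is automatic for any monotone (e.g.\ descent) linearly convergent subproblem solver warm-started at $x^k$, so the hypotheses of Theorem \ref{th:inexact_conv_rate} genuinely apply to the iterates produced.
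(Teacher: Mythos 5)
Your proposal is correct and follows exactly the route the paper intends: the paper's own ``proof'' is the single line ``follows trivially from Theorem \ref{th:inexact_conv_rate},'' and you have simply supplied the omitted details (translating $\delta^{2\log_{1/\delta}k}=1/k^2$ into the hypothesis of Theorem \ref{th:inexact_conv_rate}, reading off the outer-iteration count $t$, and summing $l(k)$), including the correct observations about dropping $a\le 1$ and the $k=0,1$ boundary terms.
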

\proof{A proof follows trivially from Theorem  \ref{th:inexact_conv_rate}. }

\begin{remark} The total complexity in terms of the inner iterations should not be viewed as  a summary of the whole complexity of Algorithm \ref{alg:ISTA-SD}.
A key step of the algorithm is the computation of $F(x^{k})$ and $\nabla f(x^k)$ at each iteration. In  big data applications this is often the 
most extensive step, hence the main complexity is defined by the number of function and gradient computations. 
Due to backtracking via proximal parameter update to satisfy sufficient decrease condition, 
the number of function and gradient  computation may be larger than the number of iterations of 
Algorithm \ref{alg:ISTA-SD}, however,
 it does not exceed this number  by more than a logarithmic factor. In practice, only several initial iterations contain backtracking steps, hence 
 Theorem  \ref{th:inexact_conv_rate} provides the bound  on the complexity in terms of function and gradient computations. 
\end{remark}

In the next section we extend our convergence rate results to the case of solving subproblems via randomized coordinate descent,  where
 $\phi$ is random and hence does not satisfy required bounds on each iteration. 

\section{Analysis of the inexact case under random subproblem accuracy}\label{sec:random}
As we pointed out in the introduction, the most efficient  practical approach to subproblem optimization, in the case when  $g(x)=\lambda \|x\|_1$, seems to be the coordinate descent method. One iteration of a coordinate descent step can be a lot less expensive 
than that of a proximal gradient or a Newton method. In particular, if matrix $H$ is constructed via the 
LBFGS approach, then one step of a coordinate decent
method takes a constant number 
of operations, $m$ (the memory size of LBFGS, which is typically 10-20). 
On the other hand, one step of proximal gradient takes $O(mn)$ operations and Newton method takes $O(nm^2)$. 

Unfortunately,  cyclic (Gauss-Seidel) coordinate descent, does not have deterministic complexity bounds, hence it is not possible to know when the work on a particular subproblem can be terminated to guarantee the desired  level of accuracy. However, a randomized coordinate descent has probabilistic complexity bounds, which can be used to demonstrate the linear rate of convergence in expectation.

We have the following probabilistic extension of Theorem \ref{th:inexact_conv_rate}.

\begin{theorem}\label{th:inexact_random_conv_rate}
Suppose that Assumption~\ref{as:exact_conv_rate} holds. 
Assume that for all $k$ iterates $\{x^k\}$ of inexact Algorithm \ref{alg:ISTA-SD} are generated with some $\phi_k\geq 0$ that satisfy
\begin{align}
\label{equ:phi_conv_rate_required}
    P\{\phi_k \leq \frac{a^2}{k^2}\}\geq 1-p, \mbox{ for\ some\ } 0<a \leq 1 \mbox{ and }  0\leq p<1,
\end{align}
conditioned on the past. 
Let  $\theta$, $b$ and $c$ be as specified in Theorem \ref{th:inexact_conv_rate}. Then for any $k$
\begin{align}
    \label{eq:bound_F_F*}
	E(F(x^k) - F(x^*) )
    \leq \frac{\max\{ba, \frac{1}{c}\}(2-p)}{(1-p)(k-1)}. 
\end{align}
\end{theorem}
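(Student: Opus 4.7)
The plan is to adapt the deterministic argument of Theorem \ref{th:inexact_conv_rate} pathwise and then convert the resulting pathwise bound into an expectation via stochastic domination. The key observation is that Lemma \ref{lem:bound_iter_inexact} is actually a dichotomy that holds for \emph{any} realized value of $\phi_k$: either Case 1 gives $\Delta F_k \leq b\sqrt{\phi_k}$, or Case 2 gives $1/\Delta F_{k+1} - 1/\Delta F_k \geq c$. Randomness enters only through whether $\sqrt{\phi_k}$ is small enough to make Case 1 useful. Let $\mathcal{E}_k := \{\phi_k \leq a^2/k^2\}$ be the ``good'' event, so that on $\mathcal{E}_k$ Case 1 produces $\Delta F_k \leq ba/k$ exactly as in the deterministic proof. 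By hypothesis $P(\mathcal{E}_k\,|\,\mathcal{F}_{k-1}) \geq 1-p$. Regardless of $\phi_k$, the prox-parameter acceptance test enforces monotonicity $\Delta F_{k+1} \leq \Delta F_k$, so on $\mathcal{E}_k^c$ no progress is guaranteed, but no regress occurs either.

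Following Theorem \ref{th:inexact_conv_rate}, I would define $k_1^\star$ as the largest index in $\{1,\ldots,k-1\}$ at which Case 1 held \emph{and} $\mathcal{E}_{k_1^\star}$ occurred (set $k_1^\star = 0$ if no such index exists). For $i > k_1^\star$ with $\mathcal{E}_i$ occurring, Case 2 must hold by maximality of $k_1^\star$, producing a gain of at least $c$ in $1/\Delta F_i$. Letting $B_0$ denote the (random) number of ``bad'' iterations $i \in \{1,\ldots,k-1\}$ at which $\mathcal{E}_i^c$ occurred, the same telescoping/convex-combination trick used in Theorem \ref{th:inexact_conv_rate} (where now the effective number of progress-making steps is $k-1-B_0$ instead of $k-1$) yields the pathwise bound
\[
\Delta F_k \;\leq\; \frac{\max\{ba,\,1/c\}}{k-1-B_0}
\]
whenever $B_0 \leq k-2$; on the (very rare) complementary event we fall back on $\Delta F_k \leq \Delta F_0$.

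The last step is to take expectations. The hypothesis $P(\mathcal{E}_i^c\,|\,\mathcal{F}_{i-1}) \leq p$ means that $B_0$ is stochastically dominated by a Binomial$(k-1,p)$ random variable, so $E[B_0] \leq p(k-1)$ and hence $E[k-1-B_0] \geq (1-p)(k-1)$. The factor $(2-p)/(1-p)$ is not the simple $1/(1-p)$ that Jensen would suggest, so the argument should proceed by splitting on the event $\{B_0 \leq \lfloor p(k-1)\rfloor\}$: on this event $k-1-B_0 \geq (1-p)(k-1)$ giving an $O(1/((1-p)(k-1)))$ contribution, while on its complement a Markov-type tail bound on $B_0$ together with monotonicity absorbs the remaining $1/(k-1)$ term, the two pieces adding to the claimed $(2-p)/((1-p)(k-1))$ constant.

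The main obstacle will be making the expectation bookkeeping yield exactly the constant $(2-p)/(1-p)$. Because $k_1^\star$ and $B_0$ are correlated random variables depending on both the Bernoulli structure of $\mathcal{E}_i$ and the adversarial assignment of Case~1 vs.\ Case~2, care is needed either to reduce everything to the single variable $B_0$ (as sketched above) or to run the argument by induction on $k$, using the conditional bound $E[\Delta F_{k+1}\,|\,\mathcal{F}_k]$ at iteration $k$ and a carefully chosen constant $C = \max\{ba,1/c\}(2-p)/(1-p)$ to close the induction. I expect the inductive route to be cleaner because it avoids an explicit evaluation of $E[1/(k-1-B_0)]$ and localizes the dependence on $p$ to a one-step conditional estimate.
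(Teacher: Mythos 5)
Your overall strategy---invoke the dichotomy of Lemma~\ref{lem:bound_iter_inexact} pathwise, locate the last iteration at which Case 1 held together with the good event $\mathcal{E}_i=\{\phi_i\le a^2/i^2\}$, and telescope---is the same as the paper's, and your pathwise bound $\Delta F_k \le \max\{ba,1/c\}/(k-1-B_0)$ is correct: by maximality of $k_1^\star$, every later iteration on which $\mathcal{E}_i$ occurs must be a Case-2 iteration contributing a gain of $c$ to $1/\Delta F$, while the bad iterations cost only non-progress. The genuine gap is in the expectation step. You propose to split on $\{B_0\le\lfloor p(k-1)\rfloor\}$ and absorb the complement with a Markov-type tail bound, but $p(k-1)$ is the \emph{mean} of the dominating $\mathrm{Binomial}(k-1,p)$ variable: the complementary event has probability of order a constant (roughly $1/2$), Markov's inequality gives only $P\{B_0> p(k-1)\}\le 1$, and on that event $1/(k-1-B_0)$ is not $O(1/(k-1))$---it ranges up to $1$ and is undefined at $B_0=k-1$, where you must fall back on $\Delta F_k\le\Delta F_0$ and pay a residual $p^{k-1}\Delta F_0$ that the claimed bound does not absorb. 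So the two pieces do not sum to $(2-p)/((1-p)(k-1))$ as asserted.

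The step can be repaired, but not by Markov. One option is to compute $E[1/(m-B_0)]$ (with $m=k-1$) directly via $\binom{m}{j}\frac{1}{m-j+1}=\frac{1}{m+1}\binom{m+1}{j}$, which gives $E\bigl[\tfrac{1}{m-B_0};\,B_0<m\bigr]\le \tfrac{2}{(1-p)(m+1)}$: the right rate, but with constant $2$ in place of $2-p$, and still leaving the $B_0=m$ residual. The paper instead conditions on the \emph{backward position} $i$ of the last good Case-1 iteration among the Case-1 iterations: that variable is geometric, $P=(1-p)p^{i-1}$, the conditional bound is $\max\{ba,1/c\}/(k-i)$ (tighter than yours, since $i\le B_0+1$ implies $k-i\ge k-1-B_0$), and the weighted sum $\sum_{i=1}^{k-1}\frac{(1-p)p^{i-1}}{k-i}$ is bounded by $\frac{2-p}{(1-p)(k-1)}$ using $\frac{1}{k-i}=\frac{1}{k-1}\bigl(1+\frac{i-1}{k-i}\bigr)$ and $\frac{i-1}{k-i}\le i-1$. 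Because the geometric weights decay, the typically-$O(1)$ value of $i$ is what produces the $(2-p)$ constant; the binomial count $B_0$ concentrates at $p(k-1)$ and cannot. Your fallback inductive route is also unlikely to be cleaner, since the dichotomy mixes $\Delta F_k$ and $1/\Delta F_k$ and does not yield a one-step recursion for $E[\Delta F_{k+1}\mid\mathcal{F}_k]$.
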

\begin{proof}
As in  the proof of Theorem \ref{th:inexact_conv_rate} consider all iterations before a particular iteration $k$.  From Lemma~\ref{lem:bound_iter_inexact}, it follows that either \eqref{equ:bound_F_inexact_case1} or \eqref{equ:bound_F_inexact_case2} must hold for each prior iteration. 
       Let $k_1 <k$  denote the index of the last iteration before $k$,  for which \eqref{equ:bound_F_inexact_case1} holds. Let $k_2$ denote the index of the second to last such iteration and so on, hence $k_i$ is the index of the last iteration such that there are exactly $i$ iterations between $k_i$ and 
       $k-1$ for which \eqref{equ:bound_F_inexact_case1} holds. Without loss of generality we can assume that $k_i$ exists for each $i$, because if it does not - we can set $k_i=0$ and obtain a better bound. 
       Let us now assume for a given $i$ that $\phi_{k_i} \leq \frac{a^2}{{k_i}^2}$ holds, but that $\phi_{k_j} > \frac{a^2}{{k_j}^2}$ for all $j=1,\ldots, i-1$ (if $i=1$ we have the case analyzed in the proof of Theorem \ref{th:inexact_conv_rate}). The analysis in the proof of Theorem \ref{th:inexact_conv_rate} extends 
       easily to the case  $i>1$ by observing that 
       \begin{align*}
	    \frac{1}{\Delta F_{l+1}} - \frac{1}{\Delta F_{l}} \geq c_l
	\end{align*}
	holds for any $k_i+1\leq l\leq k-1$, $l\neq k_j,\, j=1\ldots, i-1$, that is all the iterations for which  or \eqref{equ:bound_F_inexact_case1} does not hold. 
	We also have
	 \begin{align*}
	    \frac{1}{\Delta F_{l+1}} - \frac{1}{\Delta F_{l}} \geq 0, 
	\end{align*}
	for $k_i+1\leq l\leq k-1$, $l= k_j,\, j=1\ldots, i-1$, simply from the fact that function values never increase. 
	Summing up the above inequalities  and using the fact that $\phi_{k_i} \leq \frac{a^2}{{k_i}^2}$ we obtain 
	$\Delta F_{k}$, 
	\begin{align}\label{eq:finalsum}
	    \frac{1}{\Delta F_{k }} \geq    
	    \sum_{l=1}^{k-k_i-i}c_l+ \frac{k_l} {a b_{k_l} }\geq   (k-k_i-i)c+ \frac{k_i} {a b },
	\end{align}
	and finally, we have 
		\begin{align}\label{eq:finalsum}
	    \Delta F_{k } \leq    \frac{\max\{ba, \frac{1}{c}\}}{k-i}. 
	\end{align}

	Now, recall that  $P\{\phi_{k_i} \leq \frac{a^2}{{k_i}^2}\}\geq 1-p$, for any iteration $i$, independently of the other iteration. 
	This means that the probability that 
	$\{\phi_{k_i} \leq \frac{a^2}{{k_i}^2}\}$ and $\{\phi_{k_j} > \frac{a^2}{{k_j}^2}\}$ for all $j=1\ldots, i-1$ is $(1-p)p^i$. 
	  This implies
		\begin{align}\label{eq:finalsum}
	    E(\Delta F_{k }) \leq  \sum_{i=1}^{k-1}  \frac{\max\{ba, \frac{1}{c}\}}{k-i}(1-p)p^{i-1}\leq\frac{ \max\{ba, \frac{1}{c}\}(1-p)}{k-1} 
	    \sum_{i=1}^{k-1} ( p^{i-1}+\frac{i-1}{k-i}p^{i-1}).
	\end{align}
	To bound the term $\sum_{i=1}^{k-1} ( p^{i-1}+\frac{i-1}{k-i}p^{i-1})$ observe that $\frac{i-1}{k-i}p^{i-1}\leq (i-1)p^{i-1}$ and hence
	\[
	\sum_{i=1}^{k-1} ( p^{i-1}+\frac{i-1}{k-i}p^{i-1})\leq \frac{1}{(1-p)}+\frac{1}{(1-p)^2}\leq \frac{2-p}{(1-p)^2},
	\]
which gives us the final bound on the expected error. 
\end{proof}

We note that the (2-p) factor  is the result of an overestimate of the weighted geometric series and a tighter bound should be possible to obtain.

% Hence we will terminate the randomized coordinate descent after it performs a number of iterations which is a linear function of $i$ (it is possible to simply use $i$ for the number of iteration, but other linear functions of $i$ are more practical, as we will discuss below). 

Below we show that randomized coordinate descent can guarantee sufficient accuracy for subproblem solutions and hence maintain the sub linear convergence rate (in expectation) of Algorithm \ref{alg:ISTA-SD}. Moreover, we show in Section \ref{sec:comp}, that the randomized coordinate descent is as efficient in practice as the cyclic one.

\subsection{Analysis of Subproblem Optimization via Randomized Coordinate Descent} % (fold)
\label{sec:coordinate_descent_iteration_complexity}
%Here we propose a simple termination rule for the subproblem optimization phase,  when using randomized coordinate descent  which ensures overall convergence rates shown in the previous section.  
In randomized coordinate descent the model function $Q(\cdot)$ is iteratively minimized over one randomly chosen coordinate, while the others remain fixed. 
The method  is presented in Algorithm \ref{alg:randomized_cd} and is applied for $l$ steps, with $l$ being an input parameter. 

\begin{algorithm2e}\caption{Randomized Coordinate Descent for optimizing Model Function $Q(H, v, x)$ over $v$: {\em RCD\ }$(Q(H, v, x), x, l)$ }
    \label{alg:randomized_cd}%
%\linesnumberedhidden \dontprintsemicolon 
Set $p(x) \gets x $\; 
\For{$i=1,2,\cdots, l$}{
Choose $j \in \{ 1,2, ..., n\}$ with probability $\frac{1}{n}$\;
$z^* = \arg \underset{z}{\min}~ Q(H, p(x) + ze_j, x)$\;
$p(x) \gets p(x) + z^*e_j$\;
}
Return $p(x)$.
\end{algorithm2e}

% The key observation here is that  the number of coordinate descent steps  the current subproblem grows proportionally with the number of outer iterations. 

Here we will show how properties of coordinate descent can be used together with the analysis in Section \ref{sec:complete_conv}. Combination of
coordinate descent with the waker analysis presented in Section \ref{sec:inexact_prox} can be found in \cite{OML}. 

Our analysis is based on Richtarik and Takac's results on iteration complexity of randomized coordinate descent  \cite{Richtarik2012}. In particular, we make use of Theorem 7 in \cite{Richtarik2012}, which we restate below without proof, while adapting it to our context.

\begin{lemma}\label{lem:randomized_CD}

Let $v$ be the initial point and $Q^* := \min_{u\in\br^n} Q(H, u,v)$. If $v_l$ is the random point generated by applying $l$ randomized coordinate descent steps to a strongly convex function $Q$, then for some constant 
%$0 < \alpha < 1$ (dependent only on $n$ and $\sigma$ - the bound on the smallest eigenvalue of $H$) 
we have
\begin{align*}
    P\{Q(H,v_l,v) - Q^*\geq \phi\} \leq p, 
\end{align*}
as long as 
\begin{align*}
i\geq n(1+\mu(H)) \log(\frac{Q(H,v,v) - Q^*}{\phi p}),
\end{align*}
where $\mu(H)$ is a constant that measures conditioning of $H$ along the coordinate directions and in the worst case is at most $M/\sigma$ - the condition 
number of $H$.
\end{lemma}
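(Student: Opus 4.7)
The plan is to recognize that this is essentially a restatement of Theorem~7 of \cite{Richtarik2012}, specialized to our strongly convex model $Q(H,\cdot,v)$, so the proposal is to adapt the standard two-step argument: (i) establish a geometric decrease of the expected suboptimality per coordinate step, and (ii) apply Markov's inequality to convert the expected-value bound into the stated high-probability bound. The function $Q(H,u,v)$, viewed in $u$, is a sum of a quadratic with Hessian $H$ and the separable convex term $g(u)$, so each exact coordinate minimization in Algorithm~\ref{alg:randomized_cd} is well defined and produces a monotone descent in $Q$.

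First I would write out the per-step expected decrease. Since the quadratic part of $Q$ has coordinate-wise Lipschitz derivatives bounded by the diagonal entries $H_{jj}$, exact coordinate minimization along direction $e_j$ improves $Q$ by at least the amount guaranteed by a descent lemma of the form $Q(u)-Q(u+z^*e_j)\geq \tfrac{1}{2H_{jj}}\|\partial_j Q(u)\|^2$ (with a suitable proximal reinterpretation to handle the nonsmooth $g$). Averaging over the uniform choice of $j\in\{1,\ldots,n\}$ and invoking strong convexity of $Q$ with modulus $\sigma$ in the $H$-weighted geometry yields a contraction of the form
\begin{equation*}
\mathbb{E}\bigl[Q(H,v_{i+1},v)-Q^*\,|\,v_i\bigr]\leq \Bigl(1-\tfrac{1}{n(1+\mu(H))}\Bigr)\bigl(Q(H,v_i,v)-Q^*\bigr),
\end{equation*}
where $\mu(H)$ is exactly the conditioning constant described in the statement; in the worst case $\mu(H)\le \lambda_{\max}(H)/\lambda_{\min}(H)$, and the more refined definition from \cite{Richtarik2012} reflects how well $H$ is ``aligned'' with the coordinate axes. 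Iterating this inequality $l$ times gives
\begin{equation*}
\mathbb{E}\bigl[Q(H,v_l,v)-Q^*\bigr]\leq \Bigl(1-\tfrac{1}{n(1+\mu(H))}\Bigr)^{l}\bigl(Q(H,v,v)-Q^*\bigr).
\end{equation*}

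Next I would apply Markov's inequality to the nonnegative random variable $Q(H,v_l,v)-Q^*$, obtaining
\begin{equation*}
P\{Q(H,v_l,v)-Q^*\geq \phi\}\leq \frac{1}{\phi}\Bigl(1-\tfrac{1}{n(1+\mu(H))}\Bigr)^{l}\bigl(Q(H,v,v)-Q^*\bigr).
\end{equation*}
Forcing the right-hand side to be at most $p$ and using $\log(1/(1-x))\geq x$ for $x\in(0,1)$ with $x=1/(n(1+\mu(H)))$ gives the stated iteration complexity $l\geq n(1+\mu(H))\log\!\bigl((Q(H,v,v)-Q^*)/(\phi p)\bigr)$.

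The main obstacle, and the reason this is nontrivially an adaptation rather than a direct citation, is establishing the correct per-step contraction factor in the presence of the nonsmooth term $g$. The coordinate update must be interpreted as a proximal step for the separable $g$ rather than a plain gradient step, and the descent lemma has to be proved with respect to the $H$-induced quadratic majorant along each coordinate. Once the descent lemma is in place and combined with strong convexity to relate the per-step decrease to the current optimality gap, the remaining bookkeeping to obtain the displayed contraction constant $1-1/(n(1+\mu(H)))$ follows the argument of \cite{Richtarik2012} with only notational changes.
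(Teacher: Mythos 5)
Your proposal is correct, but note that the paper itself gives no proof of this lemma: it is explicitly restated from Theorem~7 of the cited work of Richtarik and Takac, adapted notationally to the model function $Q(H,\cdot,v)$. Your reconstruction (per-coordinate descent lemma for the composite objective, geometric contraction of the expected gap using strong convexity, then Markov's inequality and the bound $\log(1/(1-x))\geq x$ to extract the iteration count) is exactly the argument used in that reference, and your observation that the exact coordinate minimization in Algorithm~\ref{alg:randomized_cd} coincides with the proximal coordinate step because $Q$ restricted to a single coordinate is an exact quadratic plus a separable convex term is the right way to justify the adaptation.
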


Let us now state the version of  Algorithms \ref{alg:ISTA-SD} and \ref{alg:Backtrack_SD} which is close to what implement in practice and 
discuss in the following sections  and for which the complexity  bound can be applied.

\begin{algorithm2e}\caption{Proximal Quasi-Newton method using randomized coordinate descent}
    \label{alg:ISTA-SD_RCD}%
%\linesnumberedhidden \dontprintsemicolon 
{\rm Choose }
$0<\rho\leq 1$,  $a,b>0$ and  $x^0$\; 
\For{$k=0,1,2,\cdots$}{
 Choose $0<\bar \mu_k,  \theta_k>0, G_k\succeq 0$.\;
Find  $H_k=G_k+\frac{1}{2\mu_k}I$ and  $x^{k+1}  := p_{H_k,\phi_k}(x^k)$  \\ by applying {\em Prox\ Parameter\ Update\ with\ RCD\ }$(\bar \mu_k, G_k, x^k, \rho, a, b)$.\;
%Set $H_k=B_k+\frac{1}{2\mu_k}I$ and compute $x^{k+1}  := p_{H_k}(x^k)$, with \;
}
\end{algorithm2e}

\begin{algorithm2e}\caption{Prox Parameter Update with RCD $(\bar \mu, G, x, \rho, a,b)$ }
    \label{alg:Backtrack_SD_RCD}%
%\linesnumberedhidden \dontprintsemicolon 
Select $0<\beta<1$ and set $\mu=\bar \mu$\; \For{$i=1,2,\cdots$}{
Define $H=G+\frac{1}{2\mu} I$,  and compute $p(x):=p_{H,\phi}(x)$\\ by applying {\em RCD \ }$(Q(H,  v,x), x, \lceil ak+b\rceil)$\;
If $F(p(x))- F(x) \leq \rho (Q(H,  p(x),x)- F(x))$, then output $H$ and $p(x)$, {\bf Exit} \;
Else $\mu=\beta^{i}\bar  \mu$\;
}
\end{algorithm2e}

 The  conclusion of Lemma \ref{lem:randomized_CD} in application to Algorithms \ref{alg:randomized_cd}-\ref{alg:Backtrack_SD_RCD} 
 is that  when applying $l(k)$ steps of  randomized coordinate descent 
 approximately to optimize $Q(H_k,  u,x^k)$, $\phi_k\leq M_Q\delta^{l(k)}$, with probability $p$, where  
 $M_Q$ is an upper bound on  $\frac{Q(H_k,x^k, x^k) - Q(H_k, p_H(x^k), x^k)}{p}$ and $\delta=e^{-\frac{1}{n(1+\mu(H_k))}}$.
 This, together with Theorem \ref{th:inexact_random_conv_rate} implies that to solve subproblem on iteration $k$ it is sufficient to set 
 $l(k)=O(n(1+\mu(H))\log(kp/M_Q))$ and the resulting convergence rate will then obey
  Theorems \ref{th:inexact_conv_rate} and \ref{th:inexact_total_conv_rate_klogk}. 
   However, it is necessary to know constants $M_Q$ 
   and $\mu(H)$ to be able to construct efficient expression $l(k)$.
 In practice, a successful strategy
 is to select a slow growing linear function  of $k$, $l(k)=ak+b$. This certainly guarantees convergence rate of the outer iteration as in Theorem \ref{th:inexact_random_conv_rate}. In terms of overall rate this gives inferior complexity, however, we believe that the real difference in
  terms of the workload appears only in the limit, while in most cases the algorithm successfully terminated before  our practical formula $l(k)=ak+b$, described in the next section, significantly exceeds, the theoretical bound $O(n(1+\mu(H))log(kp/M_Q))$ with appropriate constants. Moreover, as noted earlier, the number of function 
  and gradient evaluations may be the dominant complexity in the big data cases, 
  hence it may be worthwhile to increase workload of coordinate descent  in order to reduce the constants in the bound in
   Theorem \ref{th:inexact_random_conv_rate}. 
   
   Finally, we note that when using ISTA method for the subproblem, instead of randomized coordinate descent, 
   the number of inner iteration does not need to depend on the dimension $n$. However, 
   it depends on $\sigma_k/M_k$ and the cost per iteration is roughly $n$ times bigger than that of coordinate descent (with LBFGS matrices). 
   Hence the overall complexity of using coordinate descent is better than that of ISTA if $\mu(H_k)\ll \sigma_k/M_k$. 
   This indeed happens often in practical problems, as is discussed in \cite{Richtarik2012} and other works on coordinate descent.

\section{Optimization Algorithm}
\label{sec:alg}

In this section we briefly describe the specifics of the general purpose algorithm that we propose within the framework of Algorithms 
\ref{alg:ISTA-SD_RCD}, \ref{alg:Backtrack_SD_RCD} and \ref{alg:randomized_cd} and that takes advantage of approximate second order information while maintaining low complexity of subproblem optimization steps.
The algorithm is designed to solve problems of the form \eqref{prob:P} with $g(x)=\lambda \|x\|_1$, but it does not use any special structure of the smooth part of the objective, $f(x)$.

 At iteration $k$ a step $d_k$  is obtained, approximately, as follows
\begin{equation}
    \label{equ:compute_d_outer_Ak}
    \nonumber d_k = \arg\min_{d} \{\nabla f(x^k)^T d + d^T H_k d + \lambda\|x^k + d\|_1; \ \mbox{s.t.}~ d_i = 0, \forall i \in \Acal_k\},
\end{equation}
with $H_k=G_k+\frac{1}{2\mu_k}I$ - a positive definite matrix  and $\Acal_k$ -  a set of coordinates fixed at the current iteration.

%\subsection{Low-Rank Hessian Approximation $B_k$} % (fold)
%\label{ssub:low_rank_hessian_approximation_b_k_} 
The positive definite matrix $G_k$ is computed by a limited memory BFGS approach. In particular,
 we use a  specific form of Hessian estimate, (see e.g. \cite{Byrd1994,NoceWrig06}),
\begin{equation}
    \label{equ:LBFGS}
    G_k = \gamma_k I - QRQ^T = \gamma_k I - Q\hat Q \quad \mbox{with  } \hat Q= RQ^T,
\end{equation}
where $Q$, $\gamma_k$ and $R$ are defined below,
\begin{align}
    Q = 
    \begin{bmatrix}
        \gamma_kS_k &T_k
    \end{bmatrix}, ~
    R = 
    \begin{bmatrix}
        \gamma_kS_k^TS_k &M_k\\
        M_k^T   &-D_k
    \end{bmatrix}^{-1}, ~
    \gamma_k = \frac{t^T_{k-1}t_{k-1}}{t^T_{k-1}s_{k-1}}.   .
\end{align}
Note that there is low-rank structure present in $G_k$, the matrix given by $Q \hat Q$, which we can exploit, but $G_k$ itself by definition is always positive definite. 
Let $m$ be a small integer which defines the number of latest
BFGS updates that are "remembered" at any given iteration (we used $10-20$).
Then $S_k$ and $T_k$ are the $p \times m$ matrices with columns defined by  vector pairs $\{s_i,t_i\}_{i=k-m}^{k-1}$ that satisfy $s_i^Tt_i > 0, s_i = x^{i+1} - x_i$ and $t_i = \nabla f(x^{i+1}) - \nabla f(x^{i}) $, 
 $M_k$ and $D_k$ are the $k \times k$ matrices
    \begin{equation*}
        (M_k)_{i,j} = 
        \begin{cases}
            s_{i-1}^Tt_{j-1} \quad &\mbox{if }i > j \\
            0   \quad &\mbox{otherwise,}
        \end{cases} \quad D_k = \diag[s^T_{k-m}t_{k-m},...,s_{k-1}^Tt_{k-1}].
    \end{equation*}
 The particular choice of $\gamma_k$ is meant  to promote   well-scaled quasi-Newton steps, so that less time is spent on line search or updating of prox parameter $\mu_k$ \cite{NoceWrig06}. In fact instead of updating and maintaining $\mu_k$, exactly as described in Algorithm \ref{alg:Backtrack_SD}  we simply
 double $\gamma_k$ in \eqref{equ:LBFGS} at each backtracking step. This can be viewed as choosing   $\mu_k=\infty$ for the first step of backtracking, and $\mu_k=1/(2^{i-1}-1)\gamma_k$ for the $i$-th backtracking step, when $i>1$. As long as $G_K$ in \eqref{equ:LBFGS}, is positive definite, with smallest eigenvalue bounded by $\sigma>0$, our theory applies to this particular backtracking procedure. 

% subsection low_rank_hessian_approximation_b_k_ (end)

\subsection{Greedy Active-set Selection $\mathcal{A}_k(\Ical_k)$} % (fold)
\label{ssub:greedy_active_set_selection}
An active-set selection strategy maintains a sequence of sets of indices $\Acal_k$ that iteratively estimates the optimal active set $\Acal^*$ which contains indices of zero entries in the optimal solution $x^*$ of (\ref{prob:P}). We introduce this strategy as a heuristic aiming to improve the efficiency of the implementation and to make it comparable with state-of-the-art methods, which also use active set strategies. A theoretical analysis of the effects of these strategies is a subject of future study. 
%: $\Acal^* = \{i \in \Pcal ~|~ (x^*)_i = 0\}$, where $\Pcal = \{1,2,...,p\}$.
%We use $\Acal_k$ to denote the set $\Acal$ at $k$-th iteration. 
The complement set of $\Acal_k$ is $\Ical_k = \{i \in \Pcal ~|~ i \notin \Acal_k\}$. 
Let $(\partial F(x^k))_i$ be the $i$-th component of a subgradient of $F(x)$ at $x^k$. We define two sets, 
%=\max \{sign(x^k_i)(\nabla f(x^k))_i-\lambda , 0\}$ 
\begin{align}
    \Ical^{(1)}_k &= \{i \in \Pcal ~|~(\partial F(x^k))_i \neq 0\}, \ \Ical^{(2)}_k = \{i \in \Pcal~|~ (x^k)_i \neq 0\}.
\end{align}
%We select $\Ical_k$ to include the entire set $\Ical^{(2)}_k$ and a small subset of  indices from $\Ical^{(1)}_k$ for which the corresponding elements of a sub gradient of $F$ and $x_k$  is the largest.  
As is done in  \cite{nGLMNET} and \cite{Hsieh2011} we select $\Ical_k$ to include the entire set $\Ical^{(2)}_k$ and the entire set $\Ical^{(1)}_k$. We also  tested a strategy which  includes only  a small subset of  indices from $\Ical^{(1)}_k$ for which the corresponding elements $|(\partial F(x^k))_i|$    are the largest.  This strategy  resulted  in a smaller size of subproblems (\ref{equ:compute_d_outer_Ak}) at the early stages of the algorithm, but did not appear to improve the overall performance of the algorithm.

\subsection{Solving the inner problem via coordinate descent}\label{sec:innerprob} % (fold)
\label{sub:the_algorithm}
We apply coordinate descent method to the  piecewise quadratic subproblem (\ref{equ:compute_d_outer_Ak}) to obtain the direction $d_k$ and exploit the special structure of $H_k$. 
Suppose $j$-th coordinate in $d$ is updated, hence $d' = d + ze_j$ ($e_j$ is the $j$-th vector of the identity). Then $z$ is obtained by solving the following one-dimensional problem
\begin{align}
    \label{equ:compute_z_prob}
    \nonumber\min_z  &(H_k)_{jj} z^2 + ( (\nabla f(x^k))_j + (2H_kd)_j  )z + \lambda | (x^k)_j + d_j + z |,
\end{align}
which has a simple closed-form solution \cite{Donoho92de-noisingby,Hsieh2011}. 

The most costly step of an iteration of the coordinate descent method is computing or maintaining vector  $H_kd$. Naively, or in the case of general $H_k$, this step takes $O(n)$ flops, since the vector needs to be updated at the end of each iteration, when one of the coordinates of vector $d$ changes.  The special form of  $G_k$ in $H_k=G_k+\frac{1}{\mu} I=\gamma_kI-Q\hat Q$ gives us an opportunity to accelerate this step, reducing the complexity from problem-dependent $O(n)$ to $O(m)$ with $m$ chosen as a small constant. In particular we only store the diagonal elements of $G_k$, $(G_k)_{ii} = \gamma_k - q_i^T\hat q_i$, where $q_i$ is the $i$th row of the matrix $Q$ and $\hat q_i$ is the $i$th column vector of the matrix $\hat Q$. We compute $(G_kd)_i$,  whenever it is needed, by 
 maintaining a $2m$ dimensional vector $v := \hat Qd$, 
which takes $O(2m)$ flops, and using  $(G_kd)_i = \gamma_k d_i - q_i^T v$.
After each coordinate step $v$ is updated by $v \gets v + z_i \hat q_i$, which costs $O(m)$. We also need to use extra memory for caching $\hat Q$ and $\hat d$ which takes $O(2mp + 2m)$ space. With the other $O(2p + 2mn)$ space for storing the diagonal of $G_k$, $Q$ and $d$, altogether we need $O(4mp + 2n + 2m)$ space, which is essentially $O(4mn)$ when $n \gg m$.

\section{Computational experiments} % (fold)
\label{sec:comp}

The aim of this section is to provide validation for our general purpose algorithm, but not to conduct extensive comparison of various inexact proximal Newton approaches. 
In particular, we aim to demonstrate a) that using the exact Hessian is not necessary in these methods, b) that backtracking using prox parameter, based on sufficient decrease condition, which our theory uses, does in fact work well in practice and c) that randomized coordinate descent is at least as effective as the cyclic one, which is standardly used by other methods.

LHAC, for \textbf{L}ow rank \textbf{H}essian \textbf{A}pproximation in \textbf{A}ctive-set \textbf{C}oordinate descent, is a C/C++ package that implements Algorithms \ref{alg:randomized_cd}-\ref{alg:ISTA-SD_RCD} for solving general $\ell_1$ regularization problems. We conduct experiments on two of the most well-known $\ell_1$ regularized models -- Sparse Inverse Covariance Selection (SICS) and Sparse Logistic Regression (SLR). The following two specialized C/C++ solvers are included in our comparisons:

\begin{itemize}
    \item QUIC: the quadratic inverse covariance algorithm for solving SICS described in \cite{Hsieh2011}. 
    \item LIBLINEAR: an improved version of GLMNET for solving SLR described in \cite{GLMNET,nGLMNET}.
\end{itemize}

Note that both of these  packages have been shown to be the state-of-the-art solvers in their respective categories (see e.g. \cite{nGLMNET,Yuan2010,Hsieh2011,Olsen2012}). 

Both QUIC and LIBLINEAR adopt line search to ensure function reduction. We have  implemented line search in LHAC as well to see how it compares to the updating of prox parameter proposed in Algorithm \ref{alg:Backtrack_SD}.  In all the experiments presented below use the following notation. 
\begin{itemize}
    \item LHAC: Algorithm \ref{alg:ISTA-SD_RCD} with backtracking on prox parameter.
    \item LHAC-L: Algorithm \ref{alg:ISTA-SD_RCD} with Armijo line search procedure described below in \eqref{equ:line_search}.
\end{itemize}

% subsection lhac_scriptsize_normalsize_l_ow_rank_normalsize_h_essian_normalsize_a_pproximation_in_active_set_normalsize_c_oordinate_descent (end)

\subsection{Experimental Settings} % (fold)
\label{sub:data_sets_and_experimental_settings}

For all of the experiments we choose the initial point $x_0 = \mathbf{0}$, and we report running time results in seconds, plotted against log-scale relative objective function decrease given by
\begin{align}        
    \log(\frac{ F(x) - F^* }{ F^* }),
\end{align}
where $F^*$ is the optimal function value. Since $F^*$ is not available, we compute an approximation by setting a small optimality tolerance,  specifically $10^{-7}$, in QUIC and LIBLINEAR. All the experiments are executed through the MATLAB mex interface. 
%For LIBLINEAR a short string containing ``-s 6 -e $e^{-7}$'' is passed to its mex function as the parameters that specifies the model to be solved -- SLR, and the optimality tolerance to use.  For QUIC it is simply a matter of passing the number $10^{-7}$ as the third parameter to its mex function. 
We also modify the source code of LIBLINEAR in both its optimization routine and mex gateway function to obtain the records of function values and the running time. We note that we simply store, in a double array, and pass the function values which the algorithm already computes,  so this adds little to nothing to LIBLINEAR's computational costs. We also adds a function call of \emph{clock()} at every iteration to all the tested algorithms, except  QUIC, which includes  a ``trace'' mode that returns automatically the track of function values and running time,  by calling \emph{clock()} iteratively. For both QUIC and LIBLINEAR we downloaded the latest versions of the publicly available source code from their official websites, compiled and built the software on the  machine on which all experiments were executed, and which uses 2.4GHz quad-core Intel Core i7 processor, 16G RAM and Mac OS.  

The optimal objective values $F^*$ obtained approximately by QUIC and LIBLINEAR are later plugged in LHAC and LHAC-L to terminate the algorithm when the following condition is satisfied
\begin{align}
    \frac{ F(x) - F^* }{ F^* } \leq 10^{-8}.
\end{align}

In LHAC we chose $\bar \mu = 1, \beta = 1/2$ and $\rho = 0.01$ for sufficient decrease (see Algorithm \ref{alg:Backtrack_SD_RCD}), and for LBFGS we use $m = 10$. 
%We also scale down the Hessian approximation pre-conditioner $\gamma$ defined in \eqref{equ:LBFGS} in most of the experiments to make the step less conservative. 

When solving the subproblems, we terminate the RCD procedure whenever the number of coordinate steps exceeds
\begin{align}        
\label{cond:subprob_tol}
     (1 + \lfloor \frac{k}{m} \rfloor ) |\Ical_k|,
 \end{align} 
where $|\Ical_k|$ denotes the number of coordinates in the current working set. Condition \eqref{cond:subprob_tol} indicates that we expect to update each coordinate in $\Ical_k$ only once when $k < m$, and that when $k > m$ we increase the number of expected passes through $\Ical_l$ by 1 every $m$ iterations, i.e., after LBFGS receives a full update. The idea is not only to avoid spending too much time on the subproblem especially at the beginning of the algorithm when the Hessian approximations computed by LBFGS are often fairly coarse, but also to solve the subproblem more accurately as the iterate moves closer to the optimality. Note that in practice when $|\Ical_k|$ is large, the value of  \eqref{cond:subprob_tol} almost always dominates $k$, hence it can be lower bounded by $l(k)=ak+b$ with some reasonably large values of $a$ and $b$, which, as we analyzed in Section \ref{sec:coordinate_descent_iteration_complexity},  guarantees the sub linear convergence rate. We also find that \eqref{cond:subprob_tol} works quite well in practice in preventing from ``over-solving'' the subproblems, particularly for LBFGS type algorithms. In Figures \ref{fig:sics-cd} we plot the data with respect to the number of RCD iterations. In particular Figures \ref{fig:sics-cd-iter} and \ref{fig:slr-cd-iter} show the number of RCD steps taken at the $k$-th iteration, as a function of  $k$. Figures \ref{fig:sics-cd-obj} and \ref{fig:slr-cd-obj} show convergence of the objective function to its optimal value as a function of the total number of RCD steps taken so far (both values are plotted in logarithmic scale). Note that RCD steps are not the only component of the CPU time of the algorithms, since gradient computation has to be performed at least once per iteration. 

In LHAC-L,  a line search procedure is employed, as is done  in QUIC and LIBLINEAR,   for the convergence  to follow from the framework by \cite{Tseng2009}. In particular, the Armijo rule chooses the step size $\alpha_k$ to be the largest element from $\{\beta^0, \beta^1, \beta^2, ... \}$ satisfying
\begin{align}
    \label{equ:line_search}
    F(x_k + \alpha_k d_k) \leq F(x_k) + \alpha_k \sigma \Delta_k,
\end{align} 
where $0 < \beta < 1, 0 < \sigma < 1$, and $\Delta_k := \nabla f_k^T d_k + \lambda \|x_k + d_k\|_1 - \lambda \|x_k\|_1$. In all the experiments we chose $\beta = 0.5, \sigma = 0.001$ for LHAC-L.

% subsection data_sets_and_experimental_settings (end)

\subsection{Sparse Inverse Covariance Selection} % (fold)
\label{sub:sparse_inverse_covariance_matrix_estimation}

\begin{figure}[!t]
    \centering
    \subfigure[$S: 1255 \times 1255 $]
    {
       \includegraphics[width=0.2\columnwidth]{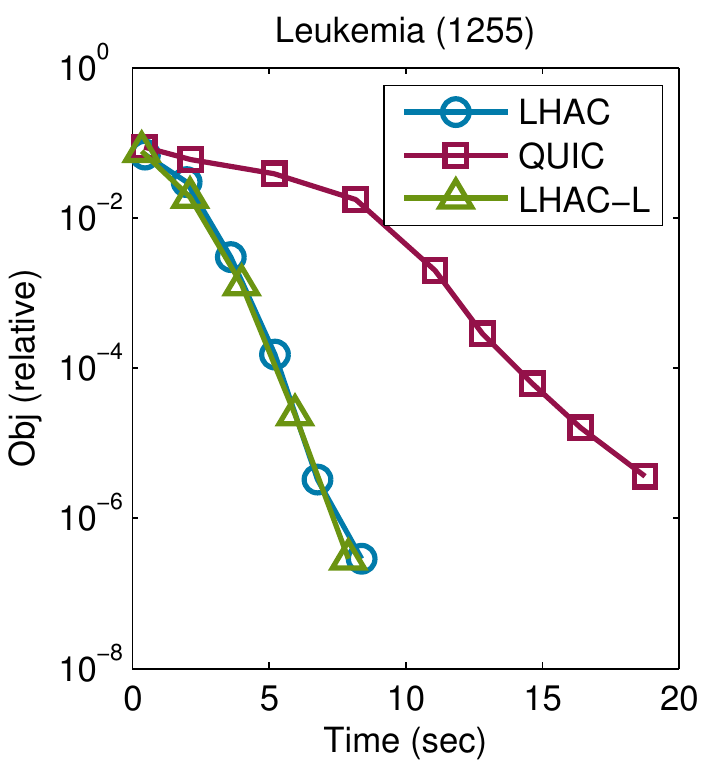}
       \label{fig:Leukemia}
    }
    ~
    \subfigure[ $S: 1869 \times 1869$ ]
    {
       \includegraphics[width=0.2\columnwidth]{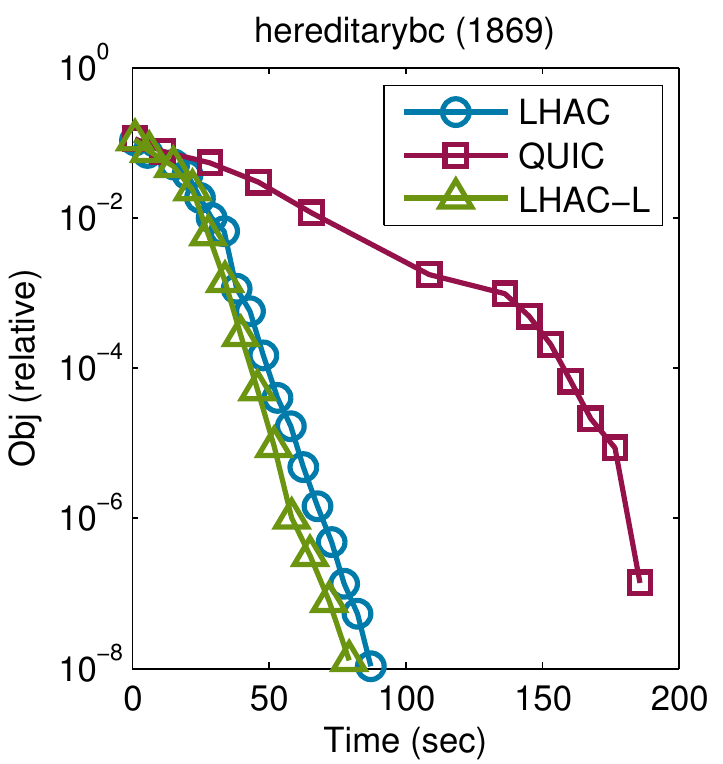}
       \label{fig:hereditarybc}
    }
    ~
    \subfigure[ $S:  692 \times 692$ ]
    {
       \includegraphics[width=0.2\columnwidth]{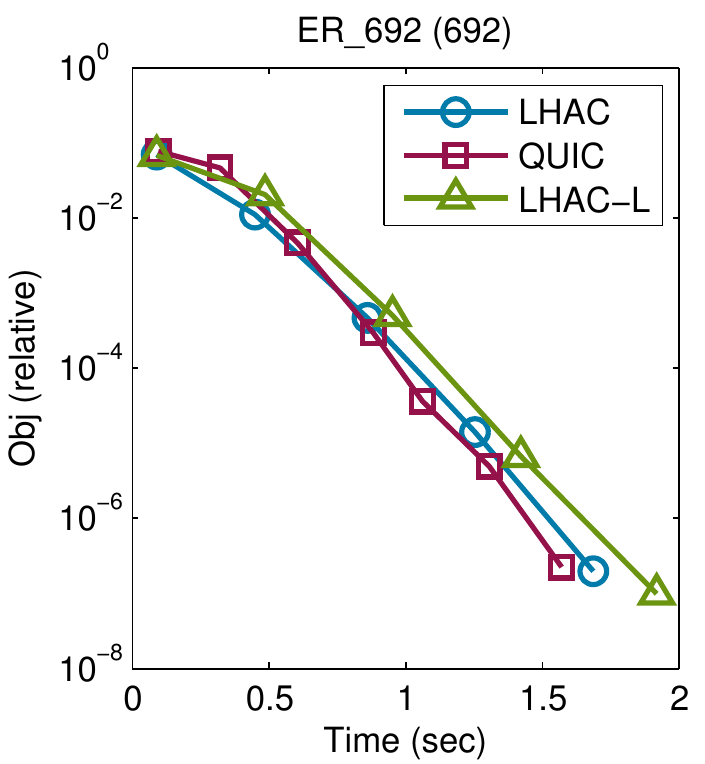}
       \label{fig:Leukemia}
    }
    ~
    \subfigure[ $S: 834 \times 834$ ]
    {
       \includegraphics[width=0.2\columnwidth]{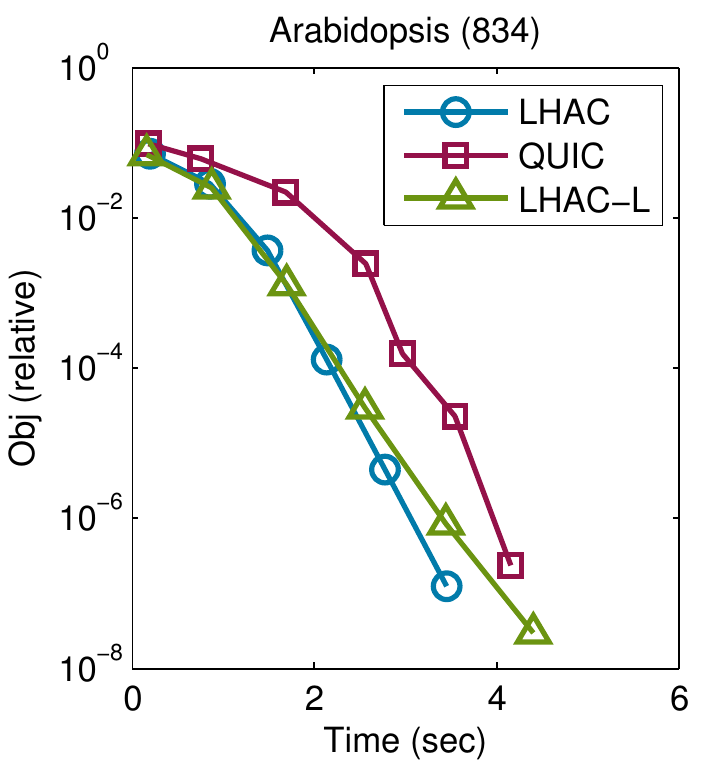}
       \label{fig:hereditarybc}
    }
    \caption{ Convergence plots on SICS (the y-axes on log scale). }
    \label{fig:sics}    
\end{figure}

\begin{figure}[!t]
    \centering
    \subfigure[The number of coordinate descent steps on subproblems w.r.t. outer iteration.]
    {
       \includegraphics[width=0.2\columnwidth]{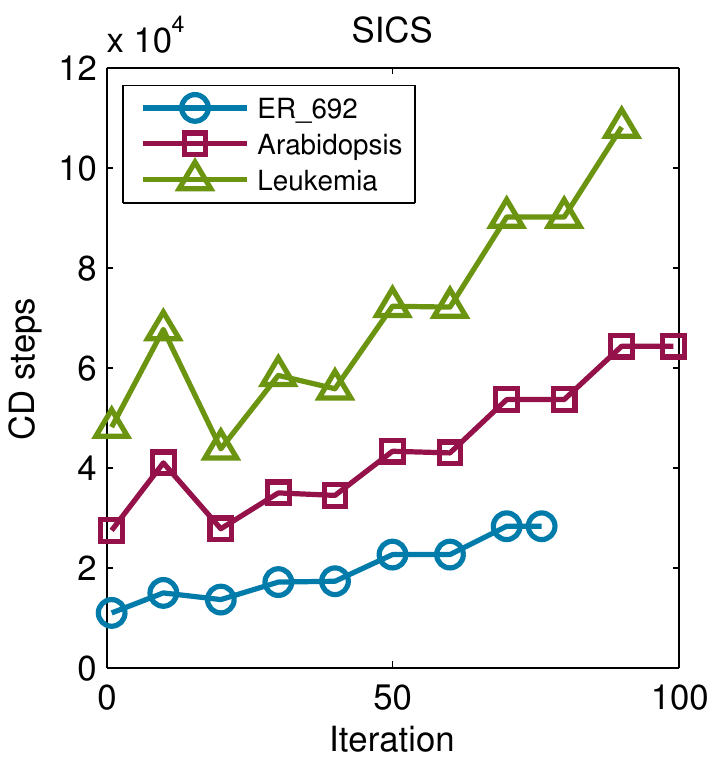}
       \label{fig:sics-cd-iter}
    }
    ~
    \subfigure[ Both axes are in log scale. Change of objective w.r.t. the number of coordinate descent steps.]
    {
       \includegraphics[width=0.2\columnwidth]{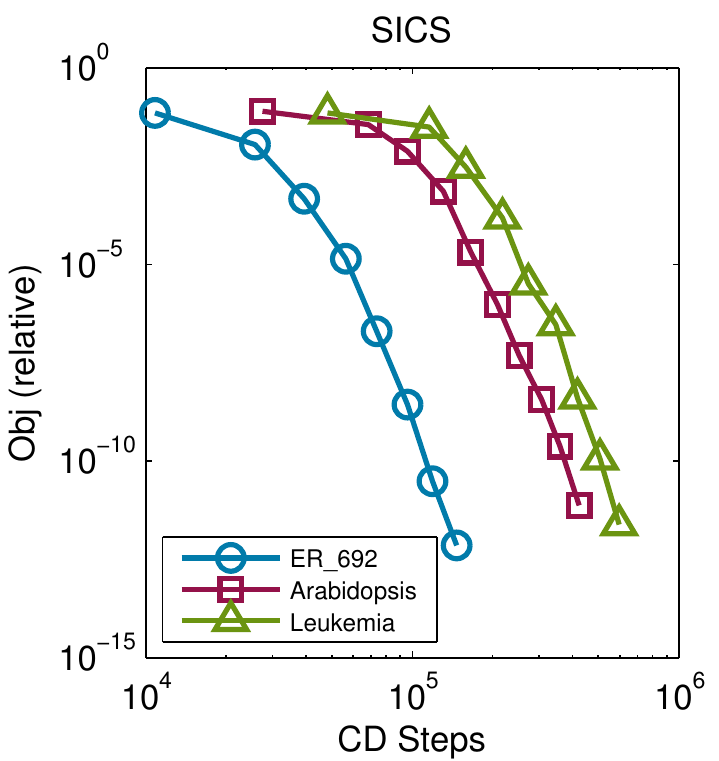}
       \label{fig:sics-cd-obj}
    }
    ~ 
    \subfigure[The number of coordinate descent steps on subproblems w.r.t. outer iteration.]
    {
       \includegraphics[width=0.2\columnwidth]{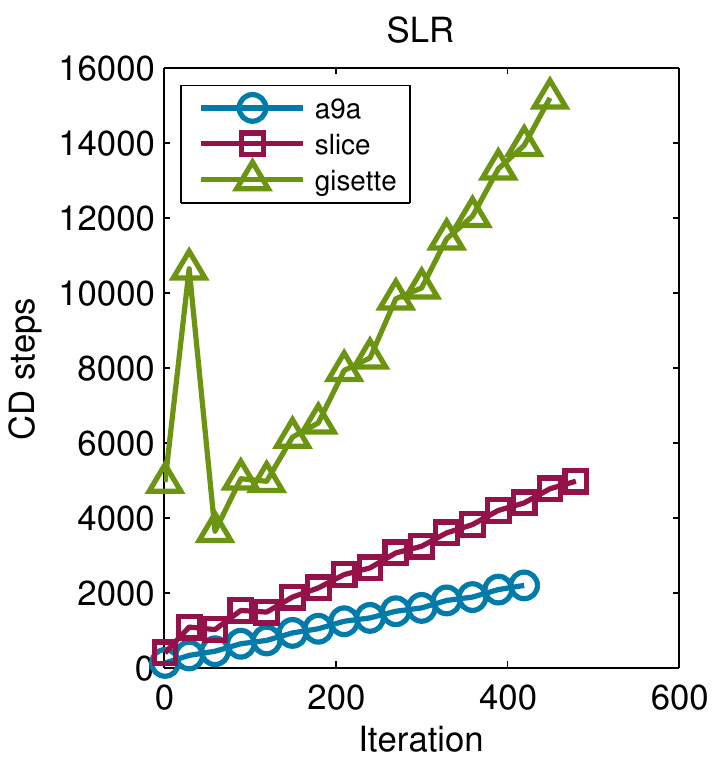}
       \label{fig:slr-cd-iter}
    }
    ~
    \subfigure[ Both axes are in log scale. Change of objective w.r.t. the number of coordinate descent steps.]
    {
       \includegraphics[width=0.2\columnwidth]{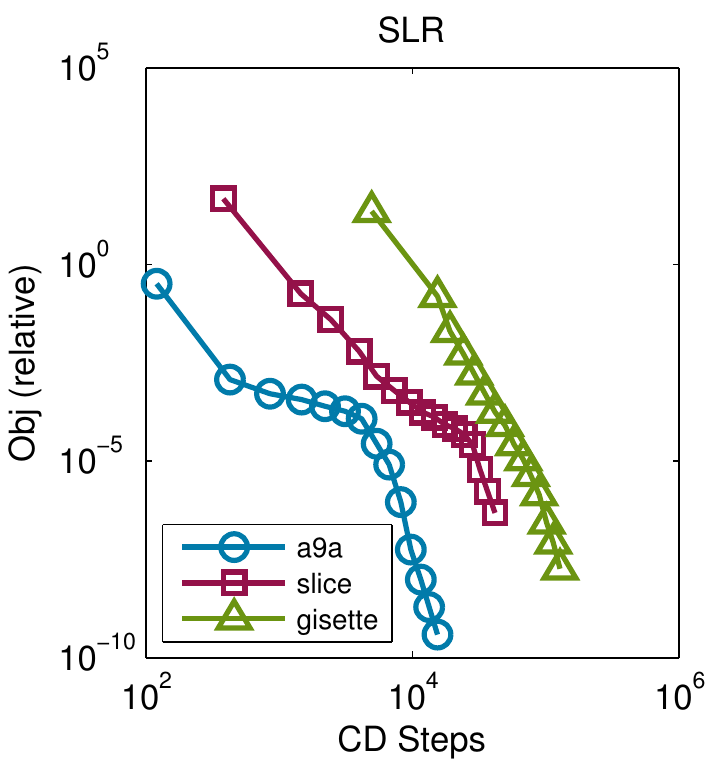}
       \label{fig:slr-cd-obj}
    } 
    % \caption{ RCD step count of  LHAC on different SLR data sets. }
    % \label{fig:slr-cd}   
    \caption{ RCD step count of LHAC on different SLR and SICS data sets. }
    \label{fig:sics-cd}   
\end{figure}

The sparse inverse covariance selection problem is defined by
\begin{align}
    \label{equ:sics_obj}
    \min_{X \succ 0} \quad F(X) = -\log \det X + \tr(SX) + \lambda ||X||_1,
\end{align}
where the input $S \in \Rmbb^{p \times p}$ is the sample covariance matrix and the optimization is over a symmetric matrix  $X \in \Rmbb^{p \times p}$ that is required to be positive definite.

For SICS we report results on four real world data sets, denoted as \emph{ER\_692}, \emph{Arabidopsis}, \emph{Leukemia} and \emph{hereditarybc}, which are preprocessed from breast cancer data and gene expression networks. We refer to \cite{Li2010} for detailed information about those data sets.

We set the regularization parameter $\lambda = 0.5$ for all experiments as suggested in \cite{Li2010}. 
The plots presented in Figure \ref{fig:sics} show that LHAC and LHAC-L is almost twice as fast as QUIC,  in the two largest data sets Leukemia and hereditarybc (see Figure \ref{fig:Leukemia} and \ref{fig:hereditarybc}). In the other two smaller data sets the results are less clear-cut, but all of the methods solve the problems very fast and the performance of LHAC is comparable to that of QUIC. The performances of LHAC and LHAC-L are fairly similar in all experiments. Again we should note that with the sufficient decrease condition proposed in Algorithm \ref{alg:Backtrack_SD} we are able to establish the global convergence rate, which has not been shown  in the case of Armijo line search.

\subsection{Sparse Logistic Regression} % (fold)
\label{sub:sparse_logistic_regression}
\begin{figure}[!t]
    \centering
    \subfigure[ a9a ($123, 32561$)]
    {
       \includegraphics[width=0.2\columnwidth]{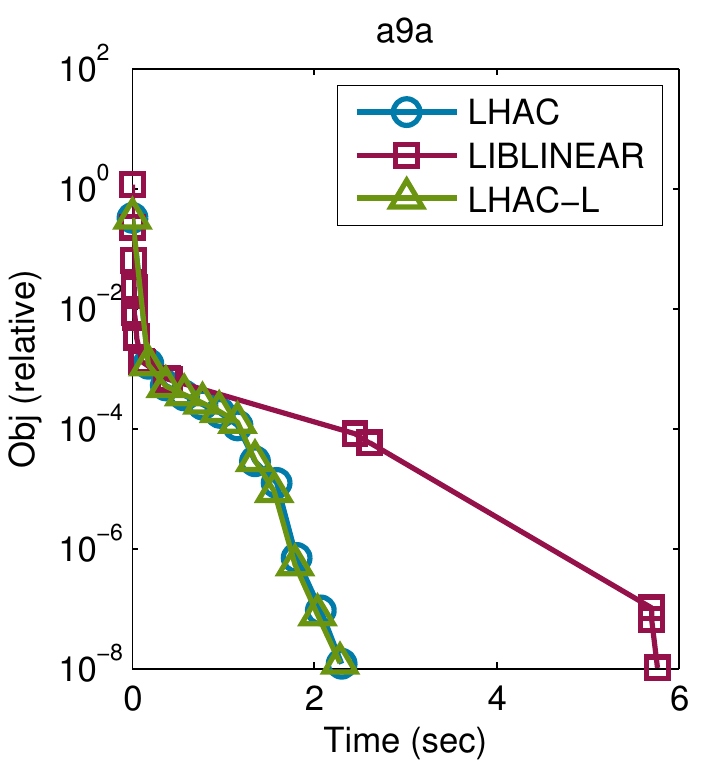}
       \label{fig:a9a}
    }
    ~
    \subfigure[ slices ($385, 53500$)]
    {
       \includegraphics[width=0.2\columnwidth]{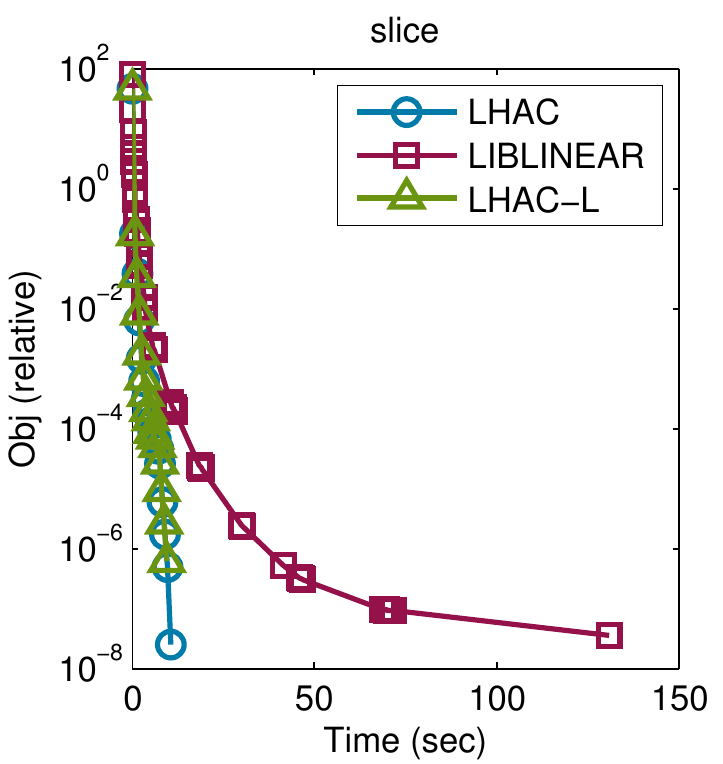}
       \label{fig:slices}
    }
    ~    
    \subfigure[ gisette ($5000, 6000$)]
    {
       \includegraphics[width=0.2\columnwidth]{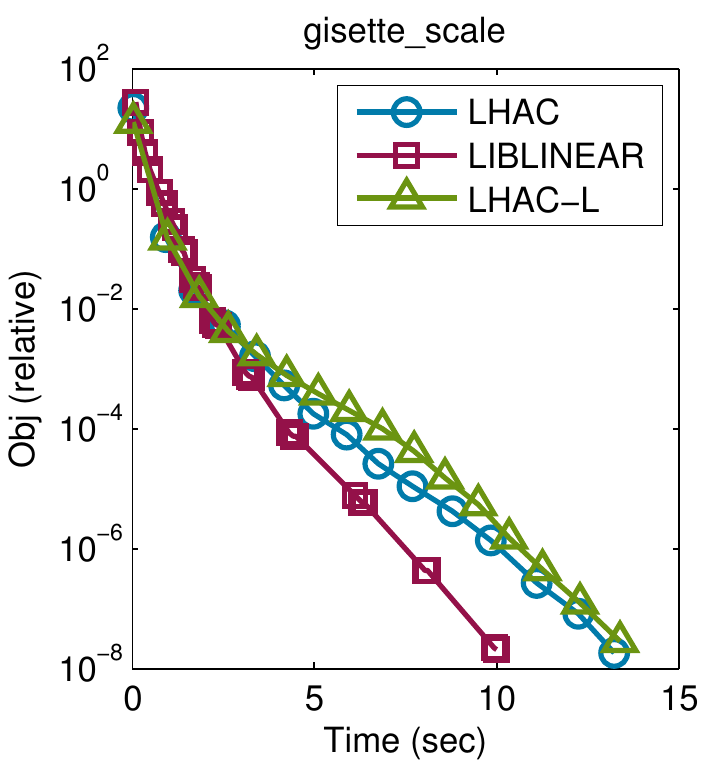}
       \label{fig:gisette}
    }
    ~
    \subfigure[epsilon ($2000, 1e^5$)]
    {
       \includegraphics[width=0.2\columnwidth]{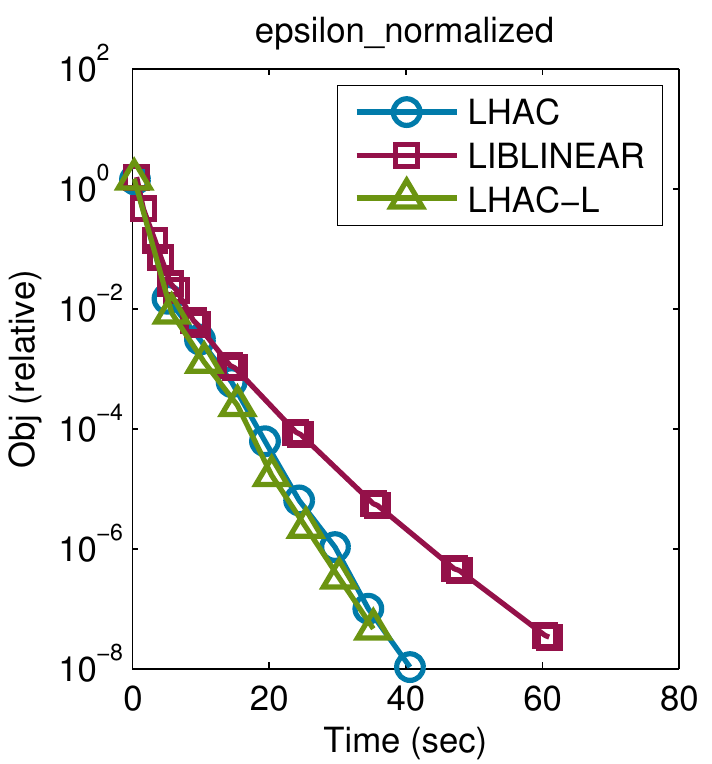}
       \label{fig:connect}
    }
    \caption{ Convergence plots on SLR. The y-axes is on log scale. Two numbers in parenthesis are features $p$ and sample size $N$. }
    \label{fig:log_reg2}
\end{figure}

% \begin{figure}[!t]
%     \centering
%     \subfigure[The number of coordinate descent steps on subproblems increases as iterates move towards optimality]
%     {
%        \includegraphics[width=0.4\columnwidth]{resources/fig-slr-cd-iter}
%        \label{fig:slr-cd-iter}
%     }
%     \quad
%     \subfigure[ Both axes are in log scale. Change of objective w.r.t. the number of coordinate descent steps.   ]
%     {
%        \includegraphics[width=0.4\columnwidth]{resources/fig-slr-obj-cd}
%        \label{fig:slr-cd-obj}
%     } 
%     \caption{ RCD step count of  LHAC on different SLR data sets. }
%     \label{fig:slr-cd}   
% \end{figure}

The objective function of sparse logistic regression is given by
\begin{align*}
%   \label{equ:logistic_regression}
    F(w) = \lambda \|w\|_1 + \frac{1}{N} \sum_{n=1}^N \log(1 + \exp(-y_n \cdot w^Tx_n)),
\end{align*}
where $L(w) = \frac{1}{N} \sum_{n=1}^N \log(1 + \exp(-y_n \cdot w^Tx_n))$ is the
average logistic loss function and $ \{ ( x_n, y_n ) \}^N_{n=1} \in ( \Rmbb^p \times \{-1,1\} ) $ is the training set. The number of instances in the training set and the number of features are denoted by $N$ and $p$ respectively. Note that the evaluation of $F$ requires $O(pN)$ flops and  to compute the Hessian requires $O(Np^2)$ flops. Hence, we chose such training sets for our experiment with $N$ and $p$  large enough to test the scalability of the algorithms and yet small enough to be completed on a workstation. 

We report results of SLR on four data sets downloaded from UCI Machine Learning repository \cite{Bache+Lichman:2013}, whose statistics are summarized in Table~\ref{tab:Data_statistics}. In particular, the first data set is the well-known UCI Adult benchmark set \emph{a9a} used for income classification, determining whether a person makes over \$50K/yr or not, based on census data; the second one we use in the experiments is called \emph{epsilon}, an artificial data set for PASCAL large scale learning challenge in 2008; 
the third one, \emph{slices}, contains features extracted from CT images and is often used for predicting the relative location of CT slices on the human body; and finally we consider \emph{gisette}, a handwritten digit recognition problem from NIPS 2003 feature selection challenge, with the feature set of size 5000 constructed in order to discriminate between two confusable handwritten digits: the four and the nine. 

\begin{table}[h!]
\begin{center}
\begin{tabular}{|l|r|r|r|p{5cm}|}
    \hline
              \textbf{Data set}    & \#features $p$ & \#instances $N$ & \#non-zeros & Description \\
    \hline
        \textbf{a9a} &  $123$ & $32561$ & $451592$   & `census Income' dataset. \\
    \hline        
        \textbf{epsilon} &  $2000$ & $100000$ & $200000000$   &  PASCAL challenge 2008. \\
    \hline    
        \textbf{gisette} &  $5000$ & $6000$ & $29729997$  & handwritten digit recognition. \\
    \hline        
        \textbf{slices} &  $385$ & $53500$ & $20597500$   &  CT slices location prediction.\\   
    \hline
\end{tabular}
\end{center}
\caption{ Data statistics in sparse logistic regression experiments. }
\label{tab:Data_statistics}
\end{table}

The results are shown in Figure~\ref{fig:log_reg2}. In most cases LHAC and LHAC-L outperform LIBLINEAR. 
On data set \emph{slice}, LIBLINEAR experiences difficulty in convergence which results in LHAC being faster by an order of magnitude.  
On the largest data set \emph{epsilon}, LHAC and LHAC-L is faster than LIBLINEAR by about one third and reaches the same precision. Finally we note that the memory usage of LIBLINEAR is more than doubled compared to that of LHAC and LHAC-L, as we observed in all the experiments and is particularly notable on  the largest data set \emph{epsilon}.

\section{Conclusion} In this paper we presented analysis of global convergence rate of inexact proximal quasi-Newton framework, and showed that randomized coordinate descent, among other subproblem methods,  can be used effectively to find inexact quasi-Newton directions, which guarantee sub linear convergence rate of the algorithm, in expectation. This is the first global convergence rate result for an algorithm that uses coordinate descent to inexactly optimize subproblems at each iteration. Moreover, we improve upon results for inexact proximal gradient method in \cite{Schmidtetal} in that our requirements on the error in subproblem solution are 
weaker and the resulting bound on the total number of inner solver iterations is smaller. 

Our framework does not rely on or exploit the accuracy of second order information, and hence we do not obtain fast local convergence rates. We also do not assume strong convexity of our objective function, hence a sublinear conference rate is  the best global rate we can hope to obtain. In \cite{Jiangetal2012} an accelerated scheme related to our framework is studied and an optimal sublinear convergence rate is shown, but  the assumptions on the Hessian approximations are a lot stronger in  \cite{Jiangetal2012} than in our paper, hence the accelerated method is not as widely applicable. The framework studied by us in this paper covers several existing efficient algorithms for large scale sparse optimization. However, to provide convergence rates we had to depart from some standard techniques, such as line-search, replacing it instead by a prox-parameter updating mechanism with a trust-region-like sufficient decrease condition for acceptance of iterates. We also use randomized coordinate descent instead of a cyclic one. We demonstrated  that this modified framework is,  nevertheless, very effective in practice and is competitive with state-of-the-art specialized methods.
%\clearpage

\bibliographystyle{siam}
% \bibliography{../All,../New_bib}
\bibliography{All,New_bib}

\end{document}